%% file: Template-Neurips-26/neurips_2026.tex
\documentclass{article}

\usepackage[preprint]{neurips_2026}

\usepackage[utf8]{inputenc} 
\usepackage[T1]{fontenc}    
\usepackage{hyperref}       
\usepackage{url}            
\usepackage{booktabs}       
\usepackage{amsfonts}       
\usepackage{nicefrac}       
\usepackage{microtype}      
\usepackage[dvipsnames]{xcolor}         
\usepackage{subcaption}
\usepackage{comment}
\usepackage{graphicx}
\usepackage{multirow}
\input{math_commands}

\input{macros}
\usepackage[capitalize]{cleveref}
\usepackage{enumitem}
\usepackage{wrapfig,lipsum,booktabs}
\usepackage{adjustbox}
\definecolor{darkblue}{rgb}{0, 0, 0.5}
\hypersetup{colorlinks=true, citecolor=darkblue, linkcolor=darkblue, urlcolor=darkblue}

\title{
RoPE Distinguishes Neither Positions Nor Tokens \\ in Long Contexts, Provably 
}

\author{
Yufeng Du \\
University of Illinois at Urbana-Champaign \\
USA \\
\texttt{yufengd4@illinois.edu} \\
\And
Phillip Harris \\
University of Bonn \\
Germany \\
\AND
Minyang Tian \\
University of Illinois at Urbana-Champaign \\
USA \\
\And
Eliu A Huerta \\
Argonne National Laboratory \\
USA
\AND
Srikanth Ronanki \\
Amazon AGI \\
USA \\
\And 
Subendhu Rongali \\
Amazon AGI \\
USA \\
\And
Aram Galstyan \\
Amazon AGI \\
USA \\
\AND
Hao Peng \\
University of Illinois at Urbana-Champaign \\
USA \\
\texttt{haopeng@illinois.edu} 
}

\begin{document}

\maketitle

\input{content/000_abstract}

\input{content/100_intro}
\input{content/n2_demystifying_rope}

\input{content/n3_failure_modes}

\input{content/211_evidence_real_model}
\input{content/300_discussions}

\input{content/limitations}

\input{content/acknowledgement}
\bibliography{ref.bib}
\bibliographystyle{Template-colm-2026/colm2026_conference}

\input{content/400_appendix}
\input{content/101_related_works}

\end{document}

%% file: math_commands.tex
\usepackage{amsmath,amsfonts,bm, amssymb}
\usepackage{amsthm}

\newtheorem{thm}{Theorem}

\theoremstyle{remark}
\newtheorem{rmk}{Remark}[section]

\newtheorem{definition}[rmk]{Definition}

\def\eqref#1{equation~\ref{#1}}

\def\1{\bm{1}}

\DeclareMathAlphabet{\mathsfit}{\encodingdefault}{\sfdefault}{m}{sl}
\SetMathAlphabet{\mathsfit}{bold}{\encodingdefault}{\sfdefault}{bx}{n}



%% file: macros.tex
\definecolor{orange}{rgb}{1,0.5,0}
\definecolor{mdgreen}{rgb}{0.05,0.6,0.05}
\definecolor{mdblue}{rgb}{0,0,0.7}
\definecolor{dkblue}{rgb}{0,0,0.5}
\definecolor{dkgray}{rgb}{0.3,0.3,0.3}
\definecolor{slate}{rgb}{0.25,0.25,0.4}
\definecolor{gray}{rgb}{0.5,0.5,0.5}
\definecolor{ltgray}{rgb}{0.7,0.7,0.7}
\definecolor{purple}{rgb}{0.7,0,1.0}
\definecolor{lavender}{rgb}{0.65,0.55,1.0}

\definecolor{mypurple}{RGB}{111,61,121}
\definecolor{myblue}{RGB}{46,88,180}
\definecolor{myred}{RGB}{181,68,106}
\definecolor{myyellow}{RGB}{204,143,55}

\newcommand{\term}[1]{\textbf{#1}} 

\DeclareSymbolFont{extraup}{U}{zavm}{m}{n}
\DeclareMathSymbol{\vardiamond}{\mathalpha}{extraup}{87}

\usepackage[most,skins,theorems]{tcolorbox}

\tcbset{
  aibox/.style={
    width=\linewidth,
    top=7pt,
    bottom=2pt,
    left=2pt,
    right=2pt,
    colback=blue!6!white,
    colframe=black,
    colbacktitle=black,
    enhanced,
    center,
    attach boxed title to top left={yshift=-0.1in,xshift=0.1in},
    boxed title style={boxrule=0pt,colframe=white,},
  }
}
\newtcolorbox{AIbox}[2][]{aibox,title=#2,label=#1}

\newcounter{hyp}
\tcbset{
  rqbox/.style={
    width=\linewidth,
    top=7pt,
    bottom=2pt,
    left=2pt,
    right=2pt,
    colback=red!6!white,
    colframe=black,
    colbacktitle=black,
    enhanced,
    center,
    attach boxed title to top left={yshift=-0.1in,xshift=0.1in},
    boxed title style={boxrule=0pt,colframe=white,},
  }
}
\newtcolorbox{RQbox}[2][]{rqbox,title=#2,before upper={\refstepcounter{hyp}\label{#1}}}

%% file: content/000_abstract.tex
\begin{abstract}

We identify intrinsic limitations of Rotary Positional Embeddings (RoPE) in Transformer-based long-context language models. 
Our theoretical analysis abstracts away from the specific content of the context and depends \emph{only} on its length.
We prove that as context length increases, RoPE-based attention becomes unpredictable and loses two properties that are central to its effectiveness. First, it loses its locality bias: RoPE is no more likely to favor nearer positions than substantially farther ones. Second, it loses consistency in token relevance: a key vector that receives a higher attention score than an alternative at one position may receive a lower score at another. In both cases, the probability of failure approaches 0.5, no better than random guessing. We further prove that the attention score can remain unchanged when a key token is moved to a different position, or even replaced by a different token, indicating a failure to distinguish positions or tokens. Adjusting the RoPE base trades off distinguishing positions against distinguishing tokens but cannot preserve both at the same time. Increasing the RoPE base hyperparameter, a common practice in today’s long-context models, helps distinguish different tokens, but inevitably sacrifices the ability to distinguish positions. 
Our empirical analysis shows that multi-head, multi-layer architectures are insufficient to overcome these limitations. Our findings suggest that fundamentally new mechanisms for encoding position and token order may be needed in future Transformer long-context language models.

\end{abstract}

%% file: content/100_intro.tex
\section{Introduction}

Positional embeddings are essential in Transformers, because attention is otherwise permutation-invariant and cannot distinguish token order \citep{vaswani2017attention}. Among the many positional embeddings, Rotary Positional Embedding (RoPE, \citealp{su2021roformer}) has emerged as the de facto choice in modern Transformer-based large language models (LLMs). The popularity of RoPE emerges from several appealing properties.
Through rotary operations, RoPE encodes the relative distance between tokens, and induces a locality bias that favors nearby tokens over distant ones \citep{su2021roformer}.
Such inductive biases align well with the structure of natural language and prove beneficial for both training convergence \citep{gelberg2025extendingcontextpretrainedllms} and extension to longer context lengths \citep{press2022train}.

Despite the increasing advertised context lengths of recent LLMs \citep{10.5555/3692070.3692634,geminiteam2024gemini15unlockingmultimodal,deepseekai2026deepseekv4}, many recent studies show that these models often struggle with long-context tasks that should be well within their capabilities, even at input lengths well within their claimed context lengths \citep{liu-etal-2024-lost, hsieh2024ruler, kuratov2024babilong, du2025contextlengthhurtsllm}. These recurring failures beg a fundamental question: \emph{Are these failures artifacts of engineering choices, or do they reflect intrinsic limitations of RoPE itself?} Answering this question is important because it determines whether future progress in long-context Transformers should focus primarily on improved engineering, or instead require fundamentally new 
new mechanisms for encoding positions and token order.

Our answer is that RoPE itself has intrinsic limitations in long contexts. 
We systematically explain this with a theoretical analysis of \textit{single-head attention}
that abstracts away from the specific content of the context and depends \emph{only} on its length.\footnote{We provide a discussion for the multihead and multilayer case in \cref{sec:real_model_discussions}.}
We show under mild assumptions\footnote{See \S\hyperref[sec:limitations]{Limitations}.} that as context length increases, RoPE’s effect on attention becomes increasingly unpredictable and undermines the very properties that make it effective in language models, struggling on two primary objectives:
\begin{itemize}[noitemsep,topsep=0pt,parsep=0pt,partopsep=0pt]
    \item First, RoPE fails to distinguish positions (\S\ref{sec:pos}). As the context length grows, the same token may receive a higher attention score at a farther position than at a closer one, with probability approaching 0.5 (\term{position inversion}; \S\ref{sec:position_inversion}). RoPE thus becomes no better than random chance at favoring nearer positions over farther ones, effectively losing its locality inductive bias.
    We further identify a specific failure mode, which we call \term{position aliasing}: for a fixed query and key, moving the key to a different position may leave its attention score unchanged, so the model no longer distinguishes positions reliably (\S\ref{sec:position_aliasing}; \cref{fig:att_inv_ill}). 
    \item Second, RoPE fails to distinguish tokens (\S\ref{sec:tok}). 
    As the context length grows, the relative ranking of two different key tokens for a given query, reflected by the attention scores they receive, can be arbitrarily reversed across positions: a token ranked above another at one position may be ranked below it at another (\term{token inversion}; \S\ref{sec:token_inversion}).
    The probability of token inversion also approaches 0.5, no better than random chance.
    Moreover, longer context induces a phenomenon we call \term{token aliasing}: for a fixed query and key position,
    replacing the key token with a different token   may leave the attention score unchanged, so the model effectively fails to distinguish tokens reliably (\S\ref{subsec:token_aliasing}).
\end{itemize}
The above theoretical results are derived from a key new insight in our analysis, which treats the unnormalized attention score as a normal random variable (\S\ref{sec:rope_as_normal}).

\begin{figure*}[t]
    
    \centering
    \includegraphics[width=0.9\textwidth]{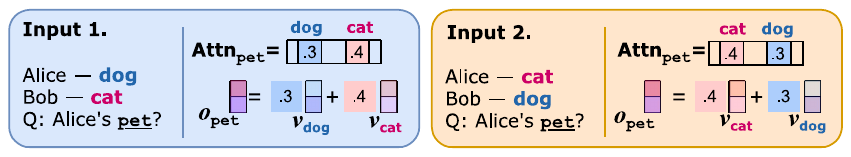}
    \caption{Position aliasing induces an attention invariance failure: there exist large numbers of  positions where swapping two key tokens (\texttt{dog}, \texttt{cat}) keeps the attention output of a query token $o_{\texttt{pet}}$ unchanged. 
    }
    \label{fig:att_inv_ill}
\end{figure*}

Our empirical analysis on Llama 3.1-8B~\citep{grattafiori2024llama3herdmodels}, which has a claimed context length of 128K tokens,
confirms our theoretical conclusions about position and token inversion.
It further shows that both position aliasing and token aliasing occur ubiquitously: across a context length of only 8K tokens, a staggering 75K pairs of positions exhibit position aliasing, appearing regardless of positional proximity; additionally, around 150 positions exhibit token aliasing in this range.
Our theory suggests that commonly used length-extension techniques do \emph{not} resolve the problem. Adjusting the RoPE base hyperparameter trades off the two failure modes rather than eliminating them. In particular, increasing the RoPE base helps preserve consistency in token relevance, but weakens the ability to distinguish positions.

Our experiments confirm that these failures persist in real multihead, multilayer LLMs (\S\ref{sec:exp}). We tested 6 popular models from 7B to over 100B on a simple task: given a list, the model must identify the value at the $k$-th position. This task addresses the ability to distinguish position, rather than distinguish token identities, since modern LLMs are commonly optimized for the latter through retrieval-style objectives \citep{kamradt2023needleinahaystack}.  
With just 4 distinct values in the list, all models perform no better than random guesses in as short as 4K tokens, a length disproportional to what these models were trained on. This strengthens our theoretical analysis of the single-head case by showing that the same positional failure persists in practical models. 

Our findings temper some of the recent optimism created by rapidly increasing advertised context lengths. Extending the nominal context length alone is flawed if the underlying positional mechanism degrades as the context length grows. Our analysis provides a mechanistic explanation for the recurring long-context failures observed in recent studies \citep{liu-etal-2024-lost, hsieh2024ruler, kuratov2024babilong, du2025contextlengthhurtsllm}, suggesting that the gap between the nominal context limit and reliable use of distant information may not be eliminated through better data or engineering alone; instead, they reflect the fundamental limitations of the positional mechanism. By identifying such limitations, this work motivates further study into fundamentally new approaches to positional mechanisms better suited to long-context language modeling.

%% file: content/n2_demystifying_rope.tex
\section{Demystifying RoPE}
\label{sec:demystifying_rope}

Attention in transformers should achieve two objectives:
(1) \textbf{Position identification}, to 
encode where a token occurs in the text and allow attention to distinguish positions and capture contextual dependencies shaped by word order.
Failures hurt the model's ability to understand the context dependency and lead to errors in tasks like counting or reasoning.
(2) \textbf{Token identification}, to have each query  distinguish among tokens and identify those that are contextually salient. 
Failures cause the model to ignore relevant inputs and generate hallucinated content.
Long-context tasks often require a combination of these two objectives \citep{vaswani2017attention,liu-etal-2024-lost,bai2024longbenchbilingualmultitaskbenchmark}.

We define the \textbf{RoPE product} as the un-normalized attention score, i.e. the dot product between a query and a key after RoPE has been applied to both.
This section aims to address two questions through the lens of the RoPE product:
How does the RoPE product help with position and token identification (\S\ref{sec:background})?
How does RoPE-based attention behave as the context length increases (\S\ref{sec:rope_as_normal})? 
We answer both through our key insight of treating the RoPE product as a normal random variable.

Throughput the paper, our theoretical analysis abstracts away from the specific content of the context and considers its length alone.

\subsection{Background}\label{sec:background}

\label{par:base_wavelength}
For a pair of query and key vectors $\mathbf{q}$ and $\mathbf{k}$, RoPE \citep{su2021roformer} divides the $d$ hidden dimensions into $h=d/2$ pairs of 2D vectors. As the token position changes, each 2D vector rotates at an angular frequency that is distinct to its dimension pair. 
The dot product between $\mathbf{q}$ and $\mathbf{k}$ after applying RoPE to both (the RoPE product) can be written as a function of their relative distance, $m$:
\begin{align*}
S(m)=S_{\mathbf{q}, \mathbf{k}}(m)=\sum_{n=0}^{h-1} a_n \cos(m\theta^n+\phi_n).
\end{align*}
The base frequency is $\theta=B^{-1/h}\in(0,1)$ where $\Theta(B) > M$
 is the RoPE base.\footnote{
Following standard practice we assume $M<\Theta(B)$,
since otherwise even the lowest frequency term oscillates and loses its uniqueness \citep{liu2026rotarypositionalembeddingsphase} The factor can be $2\pi$, $\pi$ or other values under different criteria. 
} Vectors $\mathbf{a}$ and $\bm{\phi}$ are determined solely by $\mathbf{q}$ and $\mathbf{k}$.
For the $n$-th frequency component, its amplitude $a_n>0$ is the product of the norms of the corresponding 2D vectors $(q_{2n}, q_{2n+1})$ and $(k_{2n}, k_{2n+1})$, and its phase $\phi_n \in [0, 2\pi)$ is the  angle subtended by them.

\paragraph{High-frequency components oscillate; low-frequency components decay}

For a context length limit of $M$, one typical way of analyzing the RoPE product is to separate the high and low frequency components using the threshold value $\lambda(M)=\Theta(h\log_B M)$ \citep{jonasson2025rotaryoffsetfeatureslarge, liu2023scaling, peng2024yarn, miranda2024round}. 
For $m\in [0, M)$,
\term{high-frequency} components complete at least one circle around the origin with $n\ll \lambda(M)$;
\term{low-frequency} ones only rotate a small angle with $n\gg \lambda(M)$.

\begin{figure*}[tbhp]
    
    \centering
    \begin{subfigure}[t]{0.31\textwidth}
        \centering
        \includegraphics[width=1.06\textwidth]{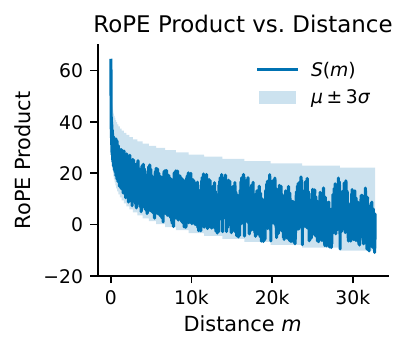}
        \caption{
        Overall pattern of RoPE waveform.
         Shadow shows estimation of $S(m)$ as a normal. 
        }
        \label{fig:rope_product_vs_distance_a}
    \end{subfigure}
    ~ 
    \begin{subfigure}[t]{0.31\textwidth}
        \centering
        \includegraphics[width=1.06\textwidth]{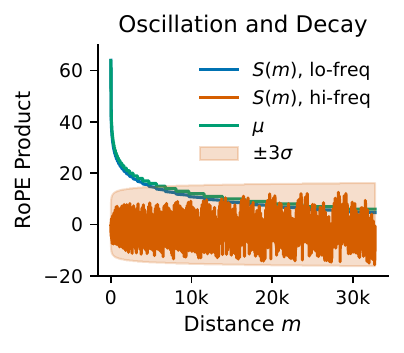}
        \caption{{\color{blue} Low} and {\color{RedOrange} high} frequency parts of $S(m)$, and {\color{ForestGreen} mean} and {\colorbox{Apricot}{standard deviation}} as a normal. 
        }
        \label{fig:rope_product_vs_distance_b}
    \end{subfigure}
    ~
    \begin{subfigure}[t]{0.31\textwidth}
            \centering
        \includegraphics[width=1.06\textwidth]{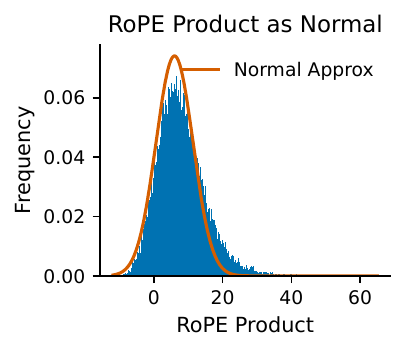}
        \caption{ {\color{blue} Full distribution} of $S(m)$ over $m\in [0, M)$.  }
        \label{fig:rope_product_as_normal}
    \end{subfigure}
    \caption{Illustration of the RoPE product and its normal approximation when $S(m)=\sum_{n}\cos(m\theta^n)$, $m\in[0, 32,768), h=64, B=10^5.$ 1k = 1,000.
    }
    \label{fig:rope_product_vs_distance}
\end{figure*}

\cref{fig:rope_product_vs_distance_b} illustrates the oscillation of the high-frequency components
and the decay effect of the low-frequency ones.\footnote{Strictly speaking, decay is not guaranteed to occur, but is nevertheless preferred. See \cref{par:decay} for a more in-depth discussion.}
RoPE helps with the two primary objectives discussed earlier.
For position identification, high-frequency oscillation helps capture the difference between close positions, while the low-frequency decay globally distinguishes distant position pairs, promoting a locality inductive bias.
For token identification, low-frequency components play a stabilizing role: their slower rotations preserve the relative ordering of token relevance, as they are less perturbed by relative distances.

\subsection{Key Insight: The RoPE Product As a Normal Random Variable}\label{sec:rope_as_normal}

Previous work has largely focused on low-frequency decay due to its analytical tractability \citep{miranda2024round,NEURIPS2024_9f12dd32,xiong-etal-2024-effective}.
We develop a probabilistic characterization of the distributional behavior of the RoPE product. 
This perspective yields a deeper understanding of RoPE's behavior.
A core theoretical contribution of this paper can be informally stated as follows:
\begin{rmk}\label{rmk:normal}
If the distance $m$ between a query $\mathbf{q}$ and a key $\mathbf{k}$ is randomly sampled from any interval $[A, M)$, where $M-A$ is large, then the RoPE product $S_{\mathbf{q},\mathbf{k}}(m)$ can be modeled as a normal random variable
$$
\widetilde{S}=\widetilde{S}_{[A,M)}({\mathbf{q},\mathbf{k}})\sim N(\mu_M({\mathbf{q},\mathbf{k}}), \sigma_M^2({\mathbf{q},\mathbf{k}})),
$$
with its mean decided by its low frequency terms, and its variance decided by its high frequency terms. The high frequency threshold is determined by the context limit, $M$. 
\end{rmk}

Remark~\ref{rmk:normal} follows from an application of the Central Limit Theorem.
See \cref{sec:rope-sum-as-normal} 
for details and 
\cref{fig:rope_product_as_normal} for empirical validation.
Remark~\ref{rmk:normal} provides a powerful tool to characterize the behavior of RoPE product $S(m)$: it behaves approximately as a normal variable whose mean decreases (decay) and variance increases (oscillation) as the context length grows.

%% file: content/n3_failure_modes.tex
\paragraph{Organization of the rest of the paper}
The rest of the paper formalizes how RoPE's intrinsic properties undermine the fundamental objectives of both \textit{position} and \textit{token} identification in long contexts. We begin with a theoretical analysis of a single attention head in \S\ref{sec:pos} and \S\ref{sec:tok}, where four specific failure modes are identified. 
For each, we first present a theoretical result and then provide empirical verification.
Our empirical analysis probes an attention head from Llama 3.1-8B \citep{grattafiori2024llama3herdmodels}, with a 128K claimed context length. We choose this model because of its popularity, moderate size, and representative decoder-only architecture.\footnote{Although Llama 3.1 uses RoPE scaling, we show in \cref{subsec:rope_scaling} that the analysis for standard RoPE still applies.} We illustrate the failure modes with a long context of mostly irrelevant text containing three relevant sentences: ``Alice has a cat,'' ``Bob has a dog,'' and ``What pet does Alice keep?'' We analyze the key tokens ``cat'' and ``dog'' and the query token ``pet''.  
We use the first head in the first layer as a case study, although our method applies to any head in any layer. See \cref{subsec:case_study_details} for implementation details.
In \S\ref{sec:exp}, we then turn to an empirical study of full multi-head, multi-layer language models.

\section{RoPE Fails to Distinguish Positions in Long Contexts}\label{sec:pos}

For the position identification objective, suppose that we are given a pair of fixed query and key tokens in an input of length $M$. The tokens may be placed at any position as long as the query token appears later. This means that the relative distance $m$ between the token pair satisfies $0\le m < M$.  

With recency bias, we expect that the key should have a high chance of receiving larger attention weights when it is closer than when the same key token is located farther away (i.e. $S(m_1)>S(m_2)$ where $m_1<m_2$). We identify two failure modes that violate this expected behavior and explain why they can be problematic.

\begin{figure*}[tbhp]
\centering
    \begin{subfigure}[t]{0.355\textwidth}
    \centering
        \includegraphics[width=\textwidth]{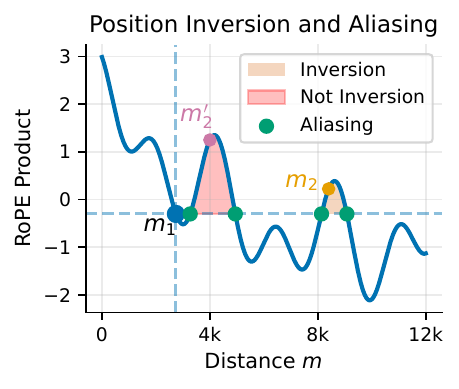}
        \caption{Position inversion occurs if $S({\color{orange}m_2})>S(m_1)$ despite $ {\color{orange}m_2} \gg m_1$. Local oscillations ({$m_1$, \color{purple}$m_2'$}) are \textit{not} considered.  }\label{fig:cosine_sum_with_three_marked_positions}
        \end{subfigure}
        ~
    \begin{subfigure}[t]{0.32\textwidth}
            \centering
        \includegraphics[width=\textwidth]{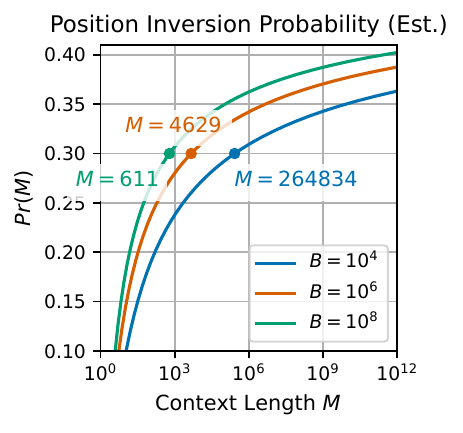}
        \caption{Probability estimation (lowerbound) of position inversion vs. context length. }
        \label{fig:position_inversion}
    
    \end{subfigure}
    ~
        \begin{subfigure}[t]{0.27\textwidth}
        \centering
        \includegraphics[width=\textwidth]{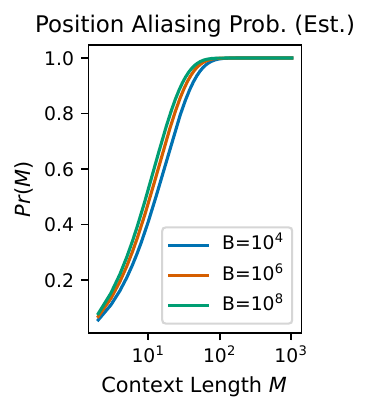}
        \caption{Probability of position aliasing at a random distance vs. context length.}
        \label{fig:position_aliasing_a}
    \end{subfigure}
    \centering
    \caption{ Illustrations (a) for position inversion and aliasing, with corresponding probability estimations under different RoPE \textbf{B}ases (b, c), $h=64$. 1k = 1,000.}
\end{figure*}

\subsection{Failure Mode 1: Position Inversion}\label{sec:position_inversion}

\textbf{Position inversion} is a reversal of RoPE’s locality inductive bias: 
given the query, moving the key to a  \emph{substantially farther} position \emph{increases} the attention score.
We focus on distant pairs drawn from opposite halves of the context, since such inversions are more detrimental than those among nearby tokens. We identify position inversions when $S(m_1)<S(m_2), m_1<M/2\le m_2$. 
See \cref{fig:cosine_sum_with_three_marked_positions} for an illustrative example.

\begin{thm}
\label{thm:positional_inversion}
The probability lowerbound of position inversion increases with context length $M$ and RoPE base $B$. The probability approaches $1/2$ as $\log M\log B\to \infty$.  
\end{thm}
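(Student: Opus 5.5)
We work directly from Remark~\ref{rmk:normal}. Draw $m_1$ uniformly from $[0,M/2)$ and $m_2$ uniformly from $[M/2,M)$, independently. By Remark~\ref{rmk:normal} applied to each interval, we model
\[
S(m_1)\approx \widetilde S_1\sim N(\mu_1,\sigma_1^2),\qquad S(m_2)\approx\widetilde S_2\sim N(\mu_2,\sigma_2^2).
\]
The means $\mu_i$ come from the low-frequency terms averaged over the respective interval. The variances $\sigma_i^2$ come from the high-frequency terms, namely those with $n\ll\lambda(M)$. Each such term contributes $a_n^2/2$ to the variance, since a cosine completing many periods has variance $1/2$. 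We treat $\widetilde S_1$ and $\widetilde S_2$ as independent. The difference $D=\widetilde S_2-\widetilde S_1$ is then normal with mean $\mu_2-\mu_1$ and variance $\sigma_1^2+\sigma_2^2$. This gives
\[
\Pr[\text{inversion}]=\Pr[D>0]=\Phi\!\left(\frac{-(\mu_1-\mu_2)}{\sqrt{\sigma_1^2+\sigma_2^2}}\right).
\]
It therefore suffices to upper bound the standardized gap $\Delta=(\mu_1-\mu_2)/\sqrt{\sigma_1^2+\sigma_2^2}$ by a quantity that tends to $0$. That bound yields the lower bound $\Phi(-\Delta)\to 1/2$.

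\textbf{Numerator.} We bound the mean gap using only the low-frequency components. Each slow term $a_n\cos(m\theta^n+\phi_n)$ with $n\gg\lambda(M)$ changes over $[0,M)$ by at most $a_n M\theta^n$, by the Lipschitz bound on cosine. Terms in the intermediate band contribute at most $2a_n$ each. Summing gives $|\mu_1-\mu_2|\le \sum_{n\ge \lambda(M)} a_n\min(2,M\theta^n)$. With $\theta=B^{-1/h}$ this is a geometric tail, and the number of effectively contributing indices scales like $h(1-\log_B M)$. Amplitudes are fixed, so we take $a_n\le a_{\max}$. As $B$ grows with $M<\Theta(B)$, frequencies concentrate toward $1$ in the exponent scale. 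Consequently $M\theta^n$ is $\le 1$ only for a vanishing fraction of indices, and the contributing set shrinks relative to $h$.

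\textbf{Denominator.} The number of high-frequency indices is $\lambda(M)=\Theta(h\log_B M)$, so $\sigma^2\gtrsim a_{\min}^2\,\lambda(M)/2$, which grows with $\log M$. Combining the two bounds gives $\Delta\lesssim C\cdot(\text{low-freq mass})/\sqrt{h\log M/\log B}$. The plan is to rewrite this in the paper's parametrization, in which $h$ scales so that $\lambda$ grows with $\log M\cdot\log B$, and show that the ratio is monotone decreasing in $M$ and $B$. Monotonicity of $\Phi$ then yields both claims: the lower bound increases with $M$ and $B$, and it tends to $1/2$ as $\log M\log B\to\infty$.

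\textbf{Main obstacle.} The hardest step is making this monotone dependence rigorous. Two difficulties arise. First, the variance grows in $M$ but shrinks in $B$ for fixed $h$, so the way $h$, $M$ and $B$ jointly enter $\lambda$ must be handled carefully. I would first try to bound the mean gap via the mean-value theorem in $\log$-frequency, $\sum_n \min(1,M\theta^n)\le 1+1/(1-\theta)\approx h/\log B$, and compare it against $\sqrt{\lambda}$. Second, the normal approximation must be controlled, for instance with a Berry--Esseen error of order $O(\lambda^{-1/2})$, which also vanishes. I would state the theorem at the level of the normal model of Remark~\ref{rmk:normal} and relegate the approximation error to the appendix argument supporting that remark.
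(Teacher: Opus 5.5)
\textbf{There is a genuine gap in the numerator estimate, and the plan for finishing rests on a misconception.} The skeleton is the paper's: you treat $S(m_1)-S(m_2)$ as a difference of independent normals, write the inversion probability as $\Phi(-\Delta)$, and bound the standardized gap $\Delta$. The problem is the numerator, which is the estimate that carries the theorem. You assert that the number of effectively contributing slow indices scales like $h(1-\log_B M)$. That is the total count of slow indices, not the effective size of a geometric tail, and with it the argument cannot reach the conclusion. For $M=B^{c}$ with fixed $c\in(0,1)$ and $B\to\infty$, so that $\log M\log B\to\infty$, it only gives $\Delta\lesssim h(1-c)/\sqrt{hc}$, which is a constant. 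The accompanying claim that $M\theta^n\le 1$ holds for only a vanishing fraction of indices is true only as $M\to\Theta(B)$.

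The missing idea, on which the paper's proof rests, is that the mean gap is carried by only $\Theta(h/\log B)$ indices, \emph{independently of $M$}. In the paper these are the band $\lambda(M/2)\le n<\lambda(M)$, of width $h\log_B 2$, whose terms are slow on $[0,M/2)$ but fast on $[M/2,M)$. This gives $\mu_1-\mu_2\lesssim a_{\max}h\log 2/\log B$. In your Lipschitz formulation the same scale comes from $\sum_{n\ge\lambda(M/2)}M\theta^n\le 2/(1-\theta)\approx 2h/\log B$, which you only gesture at under ``Main obstacle''. Note that your sum must start at $\lambda(M/2)$ rather than $\lambda(M)$ to include the band; as written, the $\min(2,\cdot)$ is never active.

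\textbf{Completing the argument.} Once you have that estimate, no reparametrization of $h$ is needed. Indeed, $\lambda(M)=h\log M/\log B$ does not grow with $\log M\log B$; it decreases in $B$. The standard deviation satisfies $\sqrt{\sigma_1^2+\sigma_2^2}\approx a\sqrt{(\lambda(M)+\lambda(M/2))/2}=a\sqrt{h(\log M-\tfrac12\log 2)/\log B}$. Dividing by it gives directly $\Delta\lesssim\log 2\sqrt{h/(\log B(\log M-\tfrac12\log 2))}$. This is explicitly decreasing in both $M$ and $B$ at fixed $h$. The numerator falls like $1/\log B$, faster than the denominator's $1/\sqrt{\log B}$, so the $B$-dependence of the variance that you flag is not an obstacle.

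You also need the saturated regime, which the paper treats separately. If $\lambda(M/2)=h$, the two means coincide and the probability is exactly $1/2$. If $\lambda(M)=h>\lambda(M/2)$, the ratio $(h-x)/\sqrt{(h+x)/2}$ with $x=\lambda(M/2)$ decreases in $x$, so the bound only improves beyond $M=B$.

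Finally, your claim that the Berry--Esseen error $O(\lambda^{-1/2})$ vanishes is wrong in this limit. With $h$ fixed we have $\lambda\le h$, and $\lambda\to 0$ if $B\to\infty$ at fixed $M$. The statement therefore has to be read, as in the paper, as a statement about the normal surrogate $\widetilde D$.
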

Theorem \ref{thm:positional_inversion}  follows directly from treating the RoPE product as a normal random variable, as discussed in Remark~\ref{rmk:normal}. See \cref{subsec:prob-inv-pos} for the formal statement and proof.

Theorem~\ref{thm:positional_inversion} states that, given a query, moving the exact key token from a closer position $m_1\in[0,M/2)$ to a substantially farther position $m_2\in[M/2,M)$ can increase its attention score with probability approaching that of a coin flip. 
This is problematic because, as the context length and RoPE base grow, attention becomes nearly arbitrary in its preference between nearby vs. farther positions, making its behaviors less predictable. 
This unpredictability may prevent the model from identifying a reliable positional pattern.

In practice, the probability of position inversion can exceed $0.3$ even at short context lengths, as shown in \cref{fig:position_inversion}. Following a convention used in the Turing Test~\citep{turing1950computing}, we assume that this rate is already high enough to signal substantial positional ambiguity.

\begin{wrapfigure}[13]{r}{.65\textwidth}
\vspace{-.5cm}
\begin{subfigure}[t]{0.49\linewidth}
        \centering
    \includegraphics[width=0.98\linewidth]{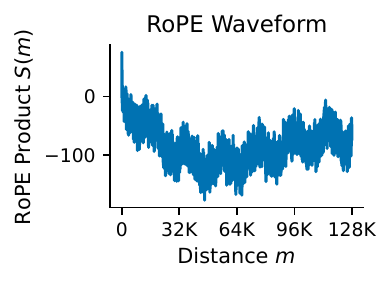}
    \caption{The RoPE decay happens only within the initial $\sim$ 50K tokens.  }
    \label{fig:position_inversion_example_rope_product_n3}
\end{subfigure}
~
\begin{subfigure}[t]{0.49\linewidth}
        \centering
    \includegraphics[width=0.98\linewidth]{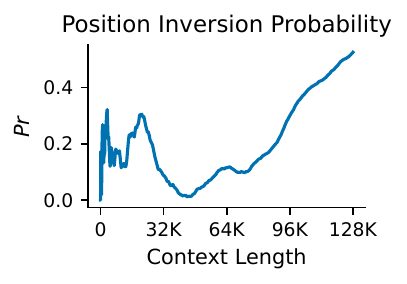}
    \caption{Position inversion probability approaches 0.5 as length increases. }
    \label{fig:position_inversion_example_prob_n3}
\end{subfigure}

\caption{Position inversion for key ``cat'' and query ``pet''. Llama 3.1-8B, Layer 0, Head 0. Here 1K = 1,024.}

\end{wrapfigure}

\paragraph{Empirical verification}
\label{subsec:case_study_position_inversion}
As shown in \cref{fig:position_inversion_example_rope_product_n3}, for the query token ``\texttt{pet}'', moving the key token ``\texttt{cat}'' across the advertised 128K context length of Llama 3.1-8B causes the attention score, i.e., the RoPE product, to reach a minimum at $m\approx 50\text{K}$. Beyond this point, the RoPE product exhibits an overall upward trend with oscillations, indicating position inversion.

\cref{fig:position_inversion_example_prob_n3} shows the corresponding probability of position inversion. Within just a few thousand tokens, this probability increases to nearly $0.3$; once $m>50\text{K}$, it continues to increase towards $0.5$. Note again that we consider only pairs $(m_1,m_2)$ where $m_1$ and $m_2$ lie in opposite halves of the full context. These inversions indicate that the model can fail to properly compare a nearby token with a substantially farther one.

\subsection{Failure Mode 2:  Position Aliasing}\label{sec:position_aliasing}

\textbf{Position aliasing} occurs when modifying the distance between query and key does not change the attention score at all.
Position aliasing can be seen as a complete failure to distinguish two different positions.
\cref{fig:cosine_sum_with_three_marked_positions} provides an illustration. An \textit{aliasing pair} refers to two distances with the same attention score.

\begin{thm}
\label{thm:position_aliasing}
The probability that a random distance admits an aliasing pair  converges to $1$ exponentially fast as the context length $M$ increases. Moreover, the total number of aliasing pairs increases with both the context length $M$ and the RoPE base $B$.
\end{thm}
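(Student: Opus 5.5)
The plan is to treat $S(m)$ as a continuous function of a real variable $m\in[0,M)$. Aliasing pairs then appear wherever $S$ takes the same value at two distances, and the intermediate value theorem produces them. Write $S=L+H$, with low-frequency part $L(m)=\sum_{n\gg\lambda(M)} a_n\cos(m\theta^n+\phi_n)$ and high-frequency part $H(m)=\sum_{n\ll\lambda(M)} a_n\cos(m\theta^n+\phi_n)$. On any window of length comparable to the longest high-frequency period, $L$ is nearly constant while $H$ completes full oscillations. So within such a window $S$ rises above and falls below the local level $L$. By continuity, any value strictly between a local maximum and a neighbouring local minimum is attained at least twice, giving aliasing pairs. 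I will discretize by saying that a distance $m$ \emph{admits an aliasing pair} if some $m'\neq m$ has $S(m')=S(m)$, with the integer-position version obtained by rounding up to a tolerance.

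For the first claim, partition $[0,M)$ into $K\asymp M\theta^{\lambda(M)}$ blocks of length roughly one high-frequency period. A random $m$ fails to admit an aliasing pair only if $S(m)$ is a value taken nowhere else. That requires $S(m)$ to be a strict global extremum, or to lie in a range that $S$ sweeps monotonically exactly once. Using Remark~\ref{rmk:normal} on each block, $S$ restricted to a block behaves like $N(\mu,\sigma^2)$ with $\sigma^2\approx\frac12\sum_{\text{high}}a_n^2>0$. The event that a value near $S(m)$ is not revisited in another block therefore has probability at most some $p<1$ per block. I will treat blocks as approximately independent, because the high-frequency phases $m\theta^n \bmod 2\pi$ equidistribute across well-separated blocks (Weyl/Kronecker for the incommensurate $\theta^n$). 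This gives a failure probability of $p^{K-1}$, which decays exponentially in $K$ and hence in $M$.

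For the second claim, count the aliasing pairs through the level-crossing structure. The number of pairs is at least $\sum_{\text{levels}}\binom{N(c)}{2}$, where $N(c)$ is the number of solutions of $S(m)=c$. Equivalently, by the Rice/Kac formula applied to the normal model, $\mathbb{E}N(c)\approx M\,\frac{\sigma'}{\pi\sigma}e^{-(c-\mu)^2/2\sigma^2}$, where $\sigma'^2=\frac12\sum a_n^2\theta^{2n}$ over the oscillating terms. $N(c)$ grows linearly in $M$, so the pair count grows at least linearly, and in fact quadratically for values near the mean. Increasing $B$ pushes $\theta=B^{-1/h}$ toward $1$, and this raises $\lambda(M)=\Theta(h\log_B M)$ relative to… I need to check the direction here. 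Larger $B$ means fewer components complete a cycle, which seems to reduce oscillation. The monotonicity in $B$ must then come from the low-frequency part: with larger $B$, $L$ varies less over $[0,M)$, so the mean drift that separates distant blocks shrinks. The blocks then share a common value range, and more cross-block coincidences occur. I would formalize this by showing that the overlap of the per-block ranges $[\mu_j-c\sigma,\mu_j+c\sigma]$ increases as $\mathrm{Var}_j(\mu_j)$ decreases, and that this variance is decreasing in $B$.

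The main obstacle is the approximate-independence step across blocks in the exponential bound. The CLT heuristic of Remark~\ref{rmk:normal} does not by itself give quantitative mixing. I would first try a second-moment or Chebyshev argument on the number of blocks whose range covers $S(m)$, using Kronecker equidistribution to control correlations. If that is too weak, I would fall back to exponential decay under the modeling assumption that block-wise values are i.i.d.\ normals.
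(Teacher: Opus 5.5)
Your proposal has a genuine gap: it never uses the finite-precision resolution $\varepsilon$, and that is what drives the theorem. Aliasing here concerns \emph{integer} distances whose RoPE products coincide in a given datatype, i.e.\ $|S(m_1)-S(m_2)|<\varepsilon$ with $\varepsilon=\Theta(2^{-f}\max(\sqrt d,\sum_n a_n))$. Your intermediate-value argument only produces exact coincidences at real distances. Any oscillating continuous function has those trivially, and they do not transfer to integers. The only control on $|S(\mathrm{round}(m'))-S(m')|$ is $\tfrac12\sup|S'|\le\tfrac12\sum_n a_n\theta^n\approx a_{\max}h/(2\log B)$, which is not small compared with $\varepsilon\approx 2^{-f}ha_{\max}$.

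The missing step is a per-pair estimate that is uniform in $M$. Model $S(m_1),S(m_2)$ as i.i.d.\ $N(\mu,\sigma^2)$ with $\sigma^2\approx\tfrac12\sum_{n<\lambda(M)}a_n^2$. This gives $\Pr(|\widetilde D|<\varepsilon)=2\Phi(\varepsilon/\sqrt2\sigma)-1\ge 2\Phi(2^{-f}\sqrt h)-1=:E$, uniformly in $M$, because $\varepsilon/(\sqrt2\sigma)\asymp 2^{-f}h/\sqrt{\lambda(M)}$ and $\lambda(M)\le h$. Treating the other $M-1$ distances as independent trials then yields $p(M)\ge 1-(1-E)^{M-1}$. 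The exponential rate comes from the number of candidate partners, not from blocks.

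Your block version cannot replace this. Since $\lambda(M)$ is by definition the index with $M\theta^{\lambda(M)}\asymp 1$, your $K\asymp M\theta^{\lambda(M)}$ is $\Theta(1)$, so $p^{K-1}$ does not decay in $M$ at all. The per-block miss probability $p<1$ is also asserted without any mechanism: with exact equality at integer points it is generically $1$.

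The same omission is why you could not settle the direction in $B$. With exact equality, the scale of $S$ is irrelevant (multiplying $S$ by a constant leaves the coincidence set unchanged), so the actual mechanism is invisible in your framing. Against a fixed absolute resolution, scale is decisive. Increasing $B$ shrinks $\lambda(M)=\Theta(h\log_B M)$ and hence $\sigma$, while $\varepsilon$ does not depend on $B$. So $\varepsilon/\sigma$ increases, and with it the per-pair probability $2\Phi(\varepsilon/\sqrt2\sigma)-1$ and the expected pair count $\approx\binom{M}{2}\bigl(2\Phi(\varepsilon/\sqrt2\sigma)-1\bigr)$. That count also grows essentially quadratically in $M$. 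Less oscillation therefore means \emph{more} aliasing, contrary to your first instinct. The low-frequency-drift mechanism is not needed; in the paper's model both samples share the same mean $\mu$.

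Likewise, the Rice formula counts crossings of the continuous path, governed by $\sigma'/\sigma$, which is the wrong quantity. What matters is how many integer positions fall into each $\varepsilon$-bin, roughly $M\varepsilon$ times the normal density. Summing $\binom{\cdot}{2}$ over bins recovers, up to a constant factor, $\binom{M}{2}\Pr(|\widetilde D|<\varepsilon)$. The cross-position independence you flag as the main obstacle is simply assumed in the paper's heuristic, so the real difficulty is not there.
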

The intuition behind Theorem~\ref{thm:position_aliasing} is that the difference between the RoPE products at two independent positions can be modeled as a zero-mean normal random variable. This allows us to estimate how often its absolute value falls below the datatype resolution used for the RoPE product. See \S\ref{subsec:position_aliasing_formal} for the formal statement and proof. See \cref{fig:position_aliasing_a} for the probability estimation. 

Theorem~\ref{thm:position_aliasing} states that position aliasing is inevitable with increased context lengths.
In practice, the issue can be amplified by limited numerical precision: it occurs when the difference between two RoPE products falls below the resolution limit of the data type.
Even when the attention scores for two distances are not exactly identical under higher precision, very small differences may be lost due to limited numerical precision.

\begin{figure*}[tbhp]
    
    \centering
    \begin{subfigure}[t]{0.32\textwidth}
        \centering
        \includegraphics[width=1.04\textwidth]{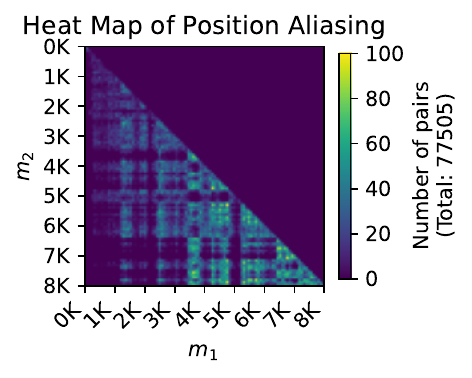}
        \caption{Position aliasing pairs for key ``cat'' and query ``pet''. 
        }
        \label{fig:position_aliasing_example_cat_n3}
    \end{subfigure}
    ~ 
    \begin{subfigure}[t]{0.32\textwidth}
        \centering
        \includegraphics[width=1.02\textwidth]{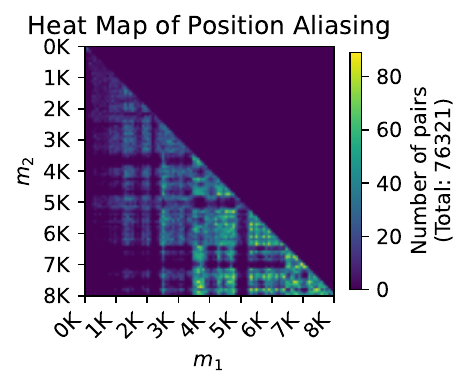}
        \caption{Position aliasing pairs for key ``dog'' and query ``pet''.}
        \label{fig:position_aliasing_example_dog_n3}
    \end{subfigure} 
    ~ 
    \begin{subfigure}[t]{0.32\textwidth}
        \centering
        \includegraphics[width=1.04\textwidth]{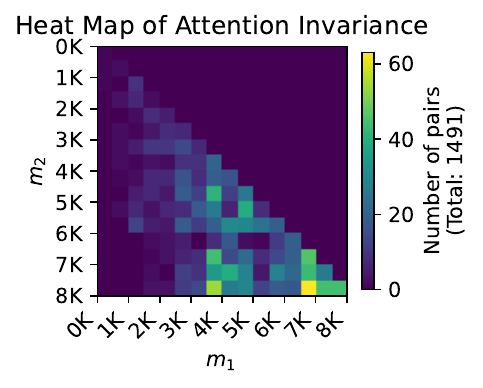}
        \caption{Attention invariance pairs for ``cat'', ``dog'' and ``pet''. 
        }
        \label{fig:attention_invariance_example_bf16_n3}
    \end{subfigure}
    \caption{Heat maps of position aliasing and attention invariance pairs under BF16, showing the ubiquity of position aliasing. Llama 3.1-8B, Layer 0 Head 0. Pairs are grouped into a total of $200\times 200$ bins for position aliasing, and $16\times 16$ bins for attention invariance. 1K = 1,024.}
    \label{fig:position_aliasing_examples_n3}
\end{figure*}

\paragraph{Empirical verification}
As shown in \cref{fig:position_aliasing_example_cat_n3,fig:position_aliasing_example_dog_n3}, under an 8K context length and commonly-used BF16 precision, almost every distance $m_1$ is involved in at least one aliasing pair $(m_1,m_2)$,
and there are already more than 75k aliasing pairs, the density of which increasing with the context length. This empirically confirms Theorem~\ref{thm:position_aliasing}
and suggests that position aliasing is a common issue even at relatively short context lengths.

\paragraph{Attention invariance caused by position aliasing} 
Position aliasing implies a specific failure mode: given a query $\mathbf{q}$ and two keys $\mathbf{k}_1$ and $\mathbf{k}_2$ at aliasing positions, swapping $\mathbf{k}_1$ and $\mathbf{k}_2$ does not change the attention output at all, as illustrated in \cref{fig:att_inv_ill}. \cref{fig:attention_invariance_example_bf16_n3} empirically verifies this failure mode, showing 1,491 such invariance cases even within an 8K context length. This further demonstrates that position aliasing can be damaging even at short context lengths.

\section{RoPE Fails to Distinguish Tokens in Long Contexts}\label{sec:tok}

For token identification, we may apply a similar analysis. 
 Let $\mathbf{q}$ be a query vector and let $\mathbf{k}_1$ and $\mathbf{k}_2$ be two key vectors. We consider the relative distances between the query and keys alone, but not a specific input context. Let $S_1(m)$ denote the RoPE product between $\mathbf{q}$ and $\mathbf{k}_1$ at distance $m$, and let $S_2(m)$ denote the corresponding RoPE product between $\mathbf{q}$ and $\mathbf{k}_2$. Assume that at $m=0$, where RoPE effectively has no effect, the first key is more relevant, i.e. $S_1(0)>S_2(0)$. 
Intuitively, 
 this relevance ordering should be preserved when both keys are placed at a new relative distance $m$, i.e. $S_1(m)>S_2(m)$. We identify the following violations.  

\subsection{Failure Mode 3: Token Inversion}\label{sec:token_inversion}

\term{Token inversion}
occurs when the relevance ordering of the two keys is reversed at distance $m$, i.e. $S_1(m) - S_2(m) < 0$ despite $S_1(0) > S_2(0)$ (See \cref{fig:token_aliasing_illust}).  
\begin{thm}
\label{thm:token_inversion}
The probability lower bound for token inversion increases with the context length $M$, approaching $1/2$ as $M$ approaches the natural context limit $\Theta(B)$. In contrast, the lower bound decreases with the RoPE base $B$.
\end{thm}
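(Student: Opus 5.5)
The plan is to reduce token inversion to the normal approximation of Remark~\ref{rmk:normal}, applied to the \emph{difference} of two RoPE products. Since both $S_1$ and $S_2$ use the same frequencies $\theta^n$, their difference is again a RoPE-type cosine sum,
\begin{align*}
D(m)=S_1(m)-S_2(m)=\sum_{n=0}^{h-1} c_n\cos(m\theta^n+\psi_n),
\end{align*}
where $c_n e^{i\psi_n}=a^{(1)}_n e^{i\phi^{(1)}_n}-a^{(2)}_n e^{i\phi^{(2)}_n}$, and $D(0)=S_1(0)-S_2(0)>0$ by assumption. Drawing $m$ uniformly from $[0,M)$, Remark~\ref{rmk:normal} lets us model $D(m)$ as $\widetilde D\sim N(\mu_M,\sigma_M^2)$. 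Here $\mu_M=\sum_{n\gg\lambda(M)}c_n\,\overline{\cos}_n$ collects the low-frequency terms, where $\overline{\cos}_n$ is the average of $\cos(m\theta^n+\psi_n)$ over $[0,M)$, which is close to $\cos\psi_n$ when the rotation is small. The variance is $\sigma_M^2\approx\tfrac12\sum_{n\ll\lambda(M)}c_n^2$, coming from the high-frequency terms, each of which averages to zero and contributes variance $c_n^2/2$. The inversion probability is then $\Pr[\widetilde D<0]=\Phi(-\mu_M/\sigma_M)$.

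The lower bound then comes from a simple bound on the ratio. Since every cosine is at most $1$, we have $|\mu_M|\le\sum_{n\ge\lambda(M)}c_n$, so
\begin{align*}
\Pr[\text{inversion}]\ \ge\ \Phi\!\Big(-\frac{\sum_{n\ge\lambda(M)}c_n}{\sqrt{\tfrac12\sum_{n<\lambda(M)}c_n^2}}\Big).
\end{align*}
This bound is monotone in the split index $\lambda(M)=\Theta(h\log_B M)$. As $M$ grows, $\lambda(M)$ increases and frequency terms move from the mean to the variance. The numerator shrinks and the denominator grows, so the lower bound increases with $M$. As $M\to\Theta(B)$, $\lambda(M)\to h$, the numerator tends to $0$ (only $O(1)$ terms remain low-frequency), and the bound tends to $\Phi(0)=1/2$. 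For the dependence on $B$ at fixed $M$, increasing $B$ decreases $\lambda(M)=\Theta(h\log M/\log B)$, which moves terms back into the mean, so the bound decreases with $B$. This is the opposite of Theorem~\ref{thm:positional_inversion}, where the bound increases with $B$, and it is consistent with the claimed trade-off.

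The main obstacle is making the mean/variance split rigorous rather than a hard threshold. The frequencies near $\lambda(M)$ neither fully oscillate nor stay frozen, and Remark~\ref{rmk:normal} is only a CLT-style approximation: the terms are deterministic in $m$, so one needs quasi-independence of $\cos(m\theta^n)$ across distinct frequencies, which holds via equidistribution since $\theta^n$ are distinct. My first attempt would be to compute the exact mean and variance of $D(m)$ over uniform $m$, using $\frac1M\sum_m\cos(m\omega+\psi)=O(1/(M\omega))$ and the vanishing of cross terms for $\omega_n\neq\omega_{n'}$. I would then lower-bound the variance by the high-frequency part and upper-bound the mean by the low-frequency part, with explicit error terms that are monotone in $M$ and $B$. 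To keep the statement content-agnostic, I would assume the amplitudes $c_n$ are bounded above and below, or exchangeable, so that the ratio depends only on how many terms fall on each side of $\lambda(M)$. That count depends only on $M$ and $B$, which gives the stated monotonicity.
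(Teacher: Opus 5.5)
Your argument is correct at the same heuristic level as the paper: a hard split at $\lambda(M)$ plus the normal model of Remark~\ref{rmk:normal}. It uses the same mechanism as the paper's proof, but the decomposition differs.

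The paper never treats a general pair $\mathbf{k}_1,\mathbf{k}_2$. It compares $\mathbf{k}$ with a hypothetical ``prime'' key $\mathbf{k}'$ that has the same amplitudes $a_n$ and all phases zero, so $\mathbf{k}'$ attains the maximal score $\sum_n a_n$ at $m=0$. In your notation this is the special case $c_n e^{i\psi_n}=a_n(1-e^{i\phi_n})$, i.e.\ $c_n=2a_n|\sin(\phi_n/2)|$ and $c_n\cos\psi_n=a_n(1-\cos\phi_n)\ge 0$. The paper's inversion probability $\Phi(\mu/\sigma)$, with $\mu=\sum_{n\ge\lambda(M)}a_n(\cos\phi_n-1)\le 0$, is then exactly your $\Phi(-\mu_M/\sigma_M)$. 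Your variance formula gives $\sigma^2=\sum_{n<\lambda(M)}a_n^2(1-\cos\phi_n)$ here; the paper's displayed $\sum a_n^2\sin^2\phi_n$ appears to be a slip.

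What the prime-token reduction buys is a signed mean. Every low-frequency term pushes the same way, so the estimated probability itself is monotone in $\lambda(M)$ and reaches $1/2$ once $\lambda(M)=h$, with no bounding step. What it costs is generality: $\mathbf{k}'$ is an extremal competitor, and the paper does not show that this case controls an arbitrary pair.

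Your route covers arbitrary pairs directly. There the terms $c_n\cos\psi_n$ have mixed signs, so $\Phi(-\mu_M/\sigma_M)$ need not be monotone in $M$. Your step $|\mu_M|\le\sum_{n\ge\lambda(M)}c_n$ is exactly what restores monotonicity, for any nonnegative $c_n$ and without regularity assumptions, and it explains why the theorem is phrased as a \emph{lower bound}. The price is that you discard the sign information in $D(0)>0$, so your bound equally lower-bounds the probability of no inversion. That is acceptable, since the claim is about unpredictability.

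One correction to your rigor plan: distinctness of the $\theta^n$ gives only pairwise decorrelation. That suffices for your second-moment computation, since $\theta^n\pm\theta^{n'}$ is never a nonzero multiple of $2\pi$. The joint equidistribution that the CLT heuristic needs, however, requires $1,\theta,\dots,\theta^{\lambda(M)}$ to be linearly independent over $\mathbb{Q}$. This fails, for example, for $B=2^{16}$, $h=32$, where $\theta^2=1/2$. The paper handles this separately with its pseudo-independence argument.
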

See \S\ref{subsec:prob-inv-imp} for the formal statement and proof.

Theorem~\ref{thm:token_inversion} states that RoPE can reverse the original ordering between two keys at some nonzero relative distance. Similarly to position inversion (\S\ref{sec:position_inversion}), the main problem with token inversion is its unpredictability: it can occur with probability approaching that of a coin flip. Suppose that $S_1(m)>S_2(m)$ for some values of $m$ but $S_1(m)<S_2(m)$ for others, with comparable frequencies; then it becomes unclear whether the model can reliably distinguish the two keys at those distances.

\begin{figure}[tbhp]
    \centering
    \begin{subfigure}[t]{0.49\textwidth}
        \centering
    \includegraphics[width=0.88\linewidth,trim={5pt 7pt 5pt 7pt},clip]{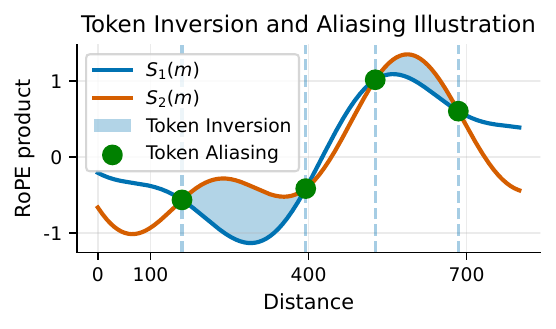}
    \caption{Token inversion: $S_1(0)>S_2(0)$ but \\$S_1(m)<S_2(m)$. Token aliasing: $S_1(m)=S_2(m)$.}
    \label{fig:token_aliasing_illust}
    \end{subfigure}
    ~
    \begin{subfigure}[t]{0.45\textwidth}
    \centering
    \includegraphics[width=0.74\linewidth,trim={5pt 7pt 5pt 7pt},clip]{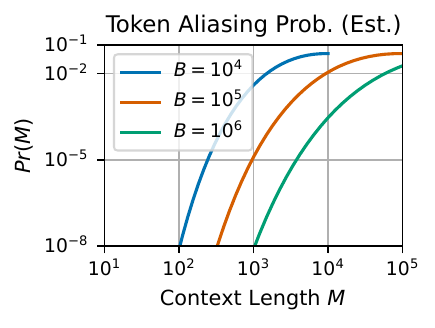}
    \caption{Typical token aliasing probabilities at different RoPE \textbf{B}ases, using BF16. 
    }
    \label{fig:token_aliasing}
    \end{subfigure}
    \caption{Illustration of token inversion and aliasing.}
\end{figure}

\begin{figure*}[tbhp]
    \centering
    \begin{subfigure}[t]{0.48\textwidth}
        \centering
        \includegraphics[width=0.99\textwidth]{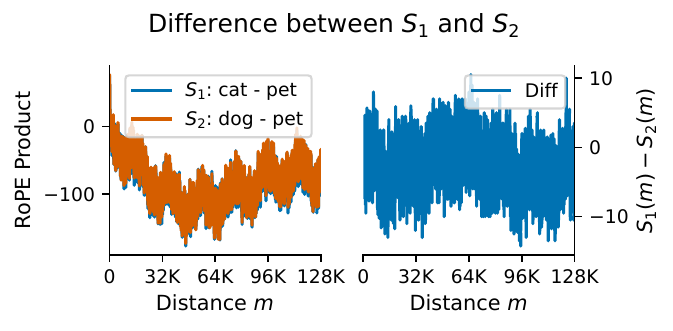}
        \caption{RoPE products for two key - query pairs (left) and their difference (right). }
        \label{fig:token_aliasing_rope_diff_n3}
    \end{subfigure}
    ~
    \begin{subfigure}[t]{0.47\textwidth}
        \centering
        \includegraphics[width=0.99\textwidth]{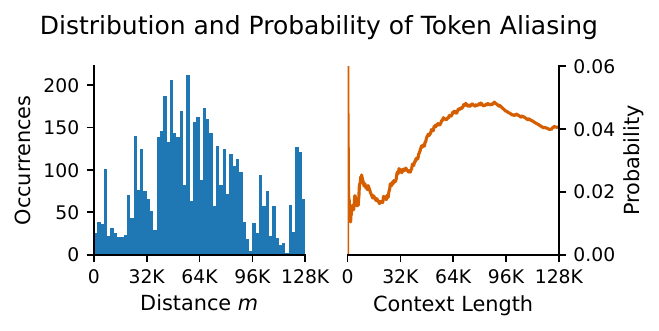}
        \caption{Distribution (left) and probability (right) of token aliasing. }
        \label{fig:token_aliasing_prob_bf16_n3}
    \end{subfigure} 
    
    \caption{Token aliasing probabilities under BF16, for query \texttt{pet}, and keys \texttt{cat} ($S_1$) and \texttt{dog} ($S_2$). Llama 3.1-8B, Layer 0 Head 0. Here, 1K = 1,024.}
    \label{fig:token_aliasing_examples_n3}
\end{figure*}

\paragraph{Empirical verification}
For the query token \texttt{pet}, we select a highly relevant key token, \texttt{cat}, and a less relevant key token, \texttt{number}. Let $S_1$ denote the RoPE product between \texttt{pet} and \texttt{cat}, and let $S_2$ denote the RoPE product between \texttt{pet} and \texttt{number}.
\cref{fig:token_inversion_prob} shows the difference $D=S_1-S_2$ and the probability curve of token inversion. Initially, $D>0$, as desired, indicating that \texttt{cat} receives a higher score than \texttt{number}. However, in fewer than 10 tokens, $D$ drops below zero and the relevance ordering between the two tokens is already reversed.
As $m$ increases, the probability of inversion exhibits an increasing lower bound, consistent with Theorem~\ref{thm:token_inversion}. When $m\ge 20K$, the  probability approaches 0.5; with an oscillating $D$, it becomes unpredictable whether \texttt{cat} or \texttt{number} receives the higher attention score.

\begin{wrapfigure}[12]{r}{.5\textwidth}
\vspace{-.5cm}
    \centering
        \includegraphics[width=\linewidth]{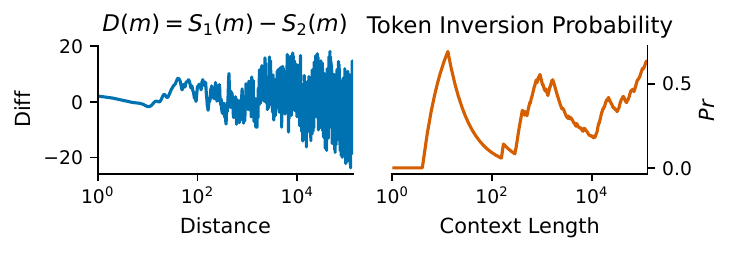}
        \caption{Token inversion for keys ``cat'', ``number'' and query ``pet''. Left: difference of RoPE product vs.  distance, where $S_1$ corresponds to ``cat'', and $S_2$ to ``number''. Right: Probability. Llama 3.1-8B, Layer 0 Head 0. }
        \label{fig:token_inversion_prob}
\end{wrapfigure}

\subsection{Failure Mode 4: Token Aliasing}
\label{subsec:token_aliasing}

At relative distance $m$, replacing $\mathbf{k}_1$ with a different key  $\mathbf{k}_2$ can leave the attention score unchanged, i.e., $S_1(m)=S_2(m)$. We refer to this phenomenon as \textbf{token aliasing}, which indicates that attention fails to distinguish between two different tokens at that position, as illustrated in \cref{fig:token_aliasing_illust}.

\begin{thm}
\label{thm:token_aliasing}
    The number of token aliasing positions increases with $M$ and decreases with $B$. For a sufficiently long context of length $M$, it is bounded by $\Theta(2^{-f}\sqrt{h}M)$, where $f$ is the explicit fraction bits of the data type used, and $h$ is the half hidden dimension.
\end{thm}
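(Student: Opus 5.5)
The plan is to reduce token aliasing to a statement about a single RoPE-type sum and then apply the normal approximation of Remark~\ref{rmk:normal}, as in the proofs of Theorems~\ref{thm:position_aliasing} and~\ref{thm:token_inversion}. Since RoPE is linear in the key, the difference $D(m)=S_1(m)-S_2(m)$ is itself the RoPE product of $\mathbf{q}$ with $\mathbf{k}_1-\mathbf{k}_2$. We can therefore write
\[
D(m)=\sum_{n=0}^{h-1} c_n\cos(m\theta^n+\psi_n)
\]
for some amplitudes $c_n\ge 0$ and phases $\psi_n$. Token aliasing at $m$ means that $|D(m)|$ falls below the resolution of the data type at the scale of $S_1(m)$. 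With $f$ explicit fraction bits, this resolution is about $2^{-f}|S_1(m)|$, and we will bound it by $\varepsilon\approx 2^{-f}\cdot\Theta(\max_m |S_1(m)|)$.

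\textbf{Step 1: the normal model for $D$.} By Remark~\ref{rmk:normal}, for $m$ uniform on $[0,M)$ we take $D\sim N(\mu_D,\sigma_D^2)$. Here $\mu_D$ is the sum of the low-frequency terms, with $n\gg\lambda(M)$, each evaluated at the near-zero angle. The variance is $\sigma_D^2=\tfrac12\sum_{n\ll\lambda(M)} c_n^2$, coming from the high-frequency terms, since each of those contributes variance $c_n^2/2$ when its angle is roughly uniform. The number of high-frequency indices is $\Theta(h\log_B M)$. This number grows with $M$ and shrinks with $B$, which gives the monotonicity claims in the same way as in Theorem~\ref{thm:token_inversion}.

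\textbf{Step 2: the aliasing probability.} The expected count of aliasing positions is $M\cdot P(|D|<\varepsilon)$. When $\varepsilon\ll\sigma_D$ we have
\[
P(|D|<\varepsilon)\approx \frac{2\varepsilon}{\sqrt{2\pi}\,\sigma_D}\,e^{-\mu_D^2/(2\sigma_D^2)}\le \frac{2\varepsilon}{\sqrt{2\pi}\,\sigma_D}.
\]
As $M$ grows, more components become high-frequency. This shrinks the mean, since fewer low-frequency terms survive and they decay, so the exponential factor increases toward $1$. For monotonicity in $B$ the effects run the other way: a larger base moves components into the low-frequency regime, which keeps $|\mu_D|$ larger and suppresses aliasing.

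\textbf{Step 3: the $\Theta(2^{-f}\sqrt h M)$ bound.} For a sufficiently long context, a constant fraction of the $h$ components are high-frequency. Under a generic-amplitude assumption, namely amplitudes of comparable order $a$ for both $S_1$ and $D$, we get $\sigma_D=\Theta(\sqrt h\,a)$. The scale of $S_1$, and hence $\varepsilon$, is controlled by its typical magnitude. For the sum of $h$ terms bounded by $h a$ this gives $\varepsilon=\Theta(2^{-f}ha)$; using the typical value $\Theta(\sqrt h a)$ instead gives $\Theta(2^{-f}\sqrt h a)$. I would use the worst case $|S_1|\le \sum a_n=\Theta(ha)$. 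Then $\varepsilon/\sigma_D=\Theta(2^{-f}\sqrt h)$, and the count is $M\cdot\Theta(2^{-f}\sqrt h)$, matching the statement.

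The main obstacle is making this last step rigorous. Two things need justification. First, the density of the approximate normal near $0$ must really be $\Theta(1/\sigma_D)$ at scale $\varepsilon$, which is a local CLT statement rather than a distributional one. The fallback is to use a Berry--Esseen bound, which requires $\varepsilon$ to dominate the $O(1/\sqrt h)$ error; otherwise we state the result at the level of the normal model, as Remark~\ref{rmk:normal} does. Second, the choice of the floating-point resolution scale relative to $\sum a_n$ must be fixed carefully, since that choice is what produces the $\sqrt h$ factor.
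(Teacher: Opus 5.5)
Your argument is correct at the paper's level of rigor and has the same skeleton as the paper's proof. Both use a normal model for $D$ and take the resolution $\varepsilon=\Theta(2^{-f}\sum_n a_n)$ from the position-aliasing analysis. Both use the small-window estimate $P(|D|<\varepsilon)\approx 2(\varepsilon/\sigma)\,\mathrm{pdf}(\mu/\sigma)$ and work in the regime where essentially all components are high-frequency, which gives $\Theta(2^{-f}\sqrt{h})$ per position.

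The real difference is how you obtain the law of $D$. The paper treats $S_1(m)$ and $S_2(m)$ as \emph{independent} normals with equal amplitudes and subtracts them, which gives $\sigma^2=\sum_{n<\lambda}a_n^2$. You instead use linearity in the key to write $D$ as a single RoPE sum with $c_n e^{i\psi_n}=a_n(e^{i\phi_{1,n}}-e^{i\phi_{2,n}})$; the paper itself uses this trick for token inversion. The two means agree exactly. Your variance, however, is $\sigma_D^2=\sum_{n<\lambda}a_n^2\bigl(1-\cos(\phi_{1,n}-\phi_{2,n})\bigr)$, which equals the paper's only after averaging over uniformly random phase differences.

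Your version is the more faithful one. $S_1(m)$ and $S_2(m)$ are driven by the same angles $m\theta^n$, so their covariance $\sum_{n<\lambda}\tfrac{a_n^2}{2}\cos(\phi_{1,n}-\phi_{2,n})$ need not vanish. Your version also shows where the bound fails: if $\mathbf{k}_1\approx\mathbf{k}_2$ blockwise, then $c_n\ll a_n$, $\sigma_D$ shrinks, and aliasing is more frequent than $\Theta(2^{-f}\sqrt{h})$. The cost is that you must state explicitly the genericity assumption $c_n\asymp a_n$, which the paper's independence hypothesis hides. In the other direction, your ``constant fraction of high-frequency components'' is enough for an upper bound via $e^{-\mu_D^2/2\sigma_D^2}\le 1$. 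The paper's limiting value, by contrast, needs $\lambda=h$ so that $\mu=0$.

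Two minor points remain. First, in Step 2 you track only the exponential factor and ignore that $\varepsilon/\sigma_D$ moves the opposite way as $\lambda$ changes. In the paper's formula the exponential wins unless $\lambda$ is already essentially $h$. Monotonicity of the \emph{count} in $M$ is trivial anyway, since the aliasing condition at a given $m$ does not involve $M$. Second, your caveat that the estimate at scale $\varepsilon$ needs a local limit statement rather than Berry--Esseen is well founded, and the paper does not address it either.
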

See \cref{subsec:token_aliasing_formal} for the formal statement and proof. 
Similar to position aliasing, token aliasing is amplified by limited numerical precision, such as BF16 where there are $f=7$ explicit fraction bits \citep{henry2019leveragingbfloat16artificialintelligence}.

\begin{table}[tbhp]
\caption{Summary of failure modes and how the chances of occurrence change with context length $M$ and RoPE base $B$. It is preferable to have {\color{ForestGreen}smaller} probabilities as opposed to {\color{red}larger} ones: for inversion this means better predictability, and for aliasing this means less ambiguity.}
    \centering
    \begin{tabular}{@{} l ccc @{}}

    \toprule
        Failure Mode & Indicator & $M\bm\uparrow$  & $B\bm\uparrow$ \\
        \midrule
        Position Inversion & $m_1<m_2$; $S(m_1)<S(m_2)$   & {\color{red}$\bm\uparrow$} & {\color{red}$\bm\uparrow$} \\ 
        Position Aliasing & $S(m_1)=S(m_2)$ &  {\color{red} $\bm\uparrow$} & {\color{red}$\bm\uparrow$} \\ 
        Token Inversion & $S_1(0)>S_2(0)$; $S_1(m)<S_2(m)$  & {\color{red}$\bm\uparrow$} & {\color{ForestGreen}$\bm\downarrow$} \\ 
        Token Aliasing & $S_1(m)=S_2(m)$ & {\color{red}$\bm\uparrow$} &  {\color{ForestGreen}$\bm\downarrow$} \\
        \bottomrule
    \end{tabular}
    
    \label{tab:failure_modes}
\end{table}

Token aliasing
can be  mitigated by increasing RoPE base 
(see \cref{fig:token_aliasing}); nevertheless, it is almost always present in long inputs. If $h=64$, using BF16, up to 5\% of the total positions exhibit token aliasing.
This means 1.6K aliasing positions in 32K tokens.

\paragraph{Empirical verification}
For query \texttt{pet} and two keys \texttt{cat} and \texttt{dog}, \cref{fig:token_aliasing_rope_diff_n3} shows the difference between the two corresponding RoPE products $D(m)=S_1(m)-S_2(m)$. 
Under BF16 precision, we count the aliasing positions where $D(m)=0$ and calculate the aliasing probability
(\cref{fig:token_aliasing_prob_bf16_n3}). According to Theorem \ref{thm:token_aliasing}, the probability should converge to a value of around 0.05, which matches the illustrated result. \cref{fig:token_aliasing_prob_bf16_n3} left shows that the frequency of token aliasing 
is negatively correlated with the RoPE products, which demonstrates how decay affects token aliasing.

We close \S\ref{sec:pos} and \S\ref{sec:tok} with \cref{tab:failure_modes} summarizing the four failure modes, along with the following takeaways:
\begin{AIbox}[find:finding]{Takeaways}{}
As the context length increases, all four failure modes become more likely, and RoPE-based attention becomes increasingly likely to fail at distinguishing both positions and tokens.
The choice of RoPE base $B$ trades off the failure modes:  
While increasing RoPE base can mitigate token inversion and token aliasing, it does not fully resolve them; moreover, it worsens position inversion and position aliasing.
In this sense, each attention head has a maximum effective context length. Beyond this limit, at least some of the four failure modes occur with sufficiently high frequency and compromise attention, regardless of how RoPE base is adjusted.
\end{AIbox}

%% file: content/211_evidence_real_model.tex
\section{How Do Multilayer, Multihead Transformer LLMs Fare?}
\label{sec:exp}

We have shown the four failure modes of a single attention head. But do multilayer, multihead Transformer LLMs overcome the limitations in practice? This section addresses this question empirically.

We conduct a controlled evaluation of six open RoPE-based long-context LLMs of different sizes, as shown in \cref{fig:llm}. We do not evaluate closed-source proprietary models, since their architectures and positional-embedding choices are not publicly known. Moreover, many recent long-context models are explicitly optimized for retrieval-based tasks, such as Needle-in-a-Haystack \citep{kamradt2023needleinahaystack}, which are essentially token-identification tasks. Our theoretical results predict that optimizing for distinguishing tokens inevitably trades off against distinguishing positions; therefore, we focus on the latter through controlled experiments.

\begin{figure}[tbhp]
    \centering
    \begin{subfigure}[t]{0.27\textwidth}
        \centering
        \includegraphics[width=0.99\textwidth]{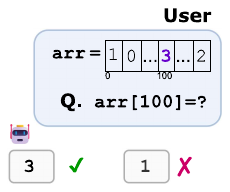}
        \caption{The indexing task.}
    \end{subfigure}
    ~
    \begin{subfigure}[t]{0.33\textwidth}
        \centering
        \includegraphics[width=1.06\textwidth]{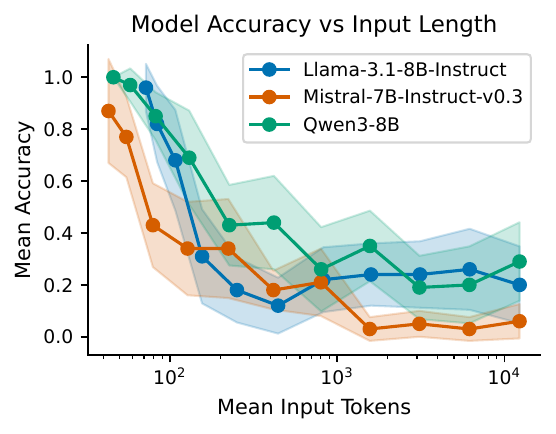}
        \caption{Small models ($\le$ 8B).}
        \label{fig:small_model_group}
    \end{subfigure}
    ~ 
    \begin{subfigure}[t]{0.33\textwidth}
        \centering
        \includegraphics[width=1.06\textwidth]{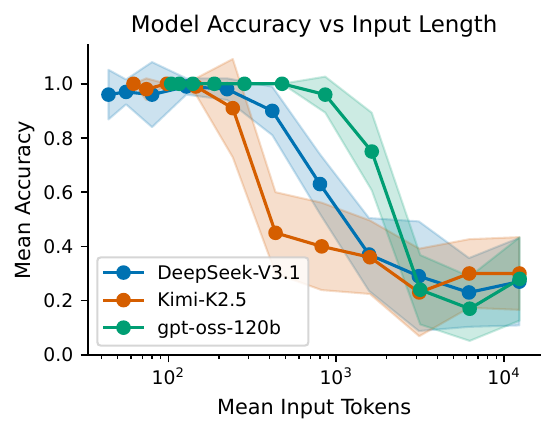}
        \caption{Large models ($\ge$ 100B).}
    \end{subfigure}

    \caption{(a) We ask models to extract the $k$-th element in an array consisting of only integers 0-3. (b, c)  Dots and shadows represent mean and standard deviation of accuracy. Selected models: \cite{grattafiori2024llama3herdmodels, mistralai_mistral_7b_instruct_v03_2024, yang2025qwen3technicalreport,deepseekai2025deepseekv3technicalreport, kimiteam2026kimik25visualagentic, openai2025gptoss120bgptoss20bmodel}}
    \label{fig:llm}
\end{figure}

\paragraph{The indexing task}

In our position identification task, the model is presented with a Python list \texttt{arr}, and is required to answer the value of the given index \texttt{arr[i]}.  
Each model receives the same input. For each list length, we test the models with 100 samples. The list only contains integers 0, 1, 2, and 3. This turns the task into a multiple-choice problem, allowing us to compare with the random-guess accuracy: if a model cannot identify the single target position (each element takes 1 token), it answers a random element with an accuracy of 0.25.

\paragraph{Results}
As shown in \cref{fig:llm}, all models start with near-perfect accuracy but quickly drop to a level close to random guessing. 
These real long-context models face serious position confusion with as short as several thousands of tokens, a cost they pay for optimizing token identification. 

See \cref{sec:exp_det} for more details on the experiment settings.

%% file: content/300_discussions.tex
\section{Conclusion and Discussion}
\label{sec:discussions}

Our theoretic analysis shows that RoPE intrinsically fails to distinguish both position and token identity in long inputs, and the selection of RoPE base only acts as a tradeoff between the two objectives. The same bottleneck persists in practical multi-head multi-layer models, as observed in the empirical verifications.
 
Our conclusion suggests that a robust positional mechanism should maintain the ability to distinguish positions and tokens, and context length cannot be effectively extended without addressing the two objectives. 
At the same time, we are encouraged by recent efforts that explore alternatives to direct length extension, including improved context management and alternative paradigms such as recursive or agentic language models. Although these approaches do not directly resolve the intrinsic limitations of RoPE identified in this work, they reflect a broader shift away from treating context extension alone as sufficient. We hope that this work encourages further research into fundamentally new positional mechanisms for long-context language models.

%% file: content/limitations.tex
\section*{Limitations}
\label{sec:limitations}
\paragraph{Ambiguity of the frequency threshold} It may be noticed in the illustrations of normal approximation of the RoPE product, that the real distribution is slightly skewed compared to our approximation. We provide only a rough threshold $\lambda(M)=\Theta(h\log_BM)$, following the common practice of dividing the entire frequency domain into two categories of high and low. Strictly speaking, the threshold frequencies where $n=\Theta(h\log_BM)$ should be considered separately: their rotations are around 1 complete circle. We mostly ignore these terms, since there are only $O(h/\log B)$ of them, usually not exceeding 10. We believe that ignoring them only affects the precision of approximation, not the fundamental conclusions.

\paragraph{Assumption of regular rotary amplitudes} We assume in this paper that the amplitudes of the RoPE products, $\{a_n\}$, are relatively uniform (no terms dominate). This is for simplicity in calculation. In real attention heads, we often observe several dimensions that have amplitudes significantly larger than others. While we did not discuss this in the paper, these dominating terms reduce the effects of other components; this heuristically works like reducing the total number of dimensions, and simplifying the oscillation patterns. The effective context length limit shortens to the maximum wavelength with a non-negligible amplitude. Smaller amplitudes for high frequency terms reduce oscillation; this may affect position identification for close position pairs, which is discussed in \cite{liu2026rotarypositionalembeddingsphase}. In short, we assume that non-regular amplitudes are suboptimal, shortening the effective context length in the sense of the failure modes we discuss in this paper.

\paragraph{Real models} We do not theoretically analyze the multi-layer multi-head attention, or conduct experiments that categorize the failure modes, since real models are significantly more complicated, and one actual failure can be caused by mixed factors.  Our final experiment merely serves as a demonstration that real models, regardless of their size, still face either position or token confusion, which we analyzed for a single head, and must sacrifice position confusion for better token identification. We do not intend to propose any new benchmarks or metrics, or make the task any realistic.  While our experiments on real models do not directly prove that the failures are actually caused by the RoPE confusions, they indicate that the redundancy provided by multiple heads and layers may have limited protection.  In \cref{sec:real_model_discussions}, we provide discussions that may serve as a preliminary insight for future investigations on how the failures accumulate with heads and propagate across layers. 

\paragraph{RoPE scaling} While we show in \cref{subsec:rope_scaling} that certain RoPE scaling methods may be reduced to standard RoPE, we do not provide a more detailed analysis for any specific variant. However, we use Llama-3.1 in our case study, which uses RoPE scaling, to illustrate that the scaling does not fundamentally resolve the problem, if it helps at all.

%% file: content/acknowledgement.tex
\section*{Acknowledgement}

This work was supported by 
 a grant from Coefficient Giving, an Amazon AICE Award, gift funding from AI2,
 and by
Laboratory Directed Research and Development (LDRD) funding from
Argonne National Laboratory, provided by the Director, Office of Science, of the U.S. Department
of Energy under Contract No. DE-AC02-06CH11357. The work used resources of the Argonne
Leadership Computing Facility, a DOE Office of Science User Facility supported under Contract
DE-AC02-06CH11357.  EAH acknowledges support from National Science Foundation (NSF) awards
OAC-2514142 and OAC-2209892 . PH was supported by Deutsche Forschungsgemeinschaft (DFG) Grant No. SFB-TRR 358/1 2023 — 491392403. This research also used the Delta advanced computing and data resources, which
is supported by the National Science Foundation (award OAC 2005572) and the State of Illinois.
Delta is a joint effort of the University of Illinois Urbana-Champaign and its National Center for
Supercomputing Applications. This research used the DeltaAI advanced computing and data resource,
which is supported by the National Science Foundation (award OAC 2320345) and the State of
Illinois. DeltaAI is a joint effort of the University of Illinois Urbana-Champaign and its National
Center for Supercomputing Applications.

%% file: content/400_appendix.tex
\appendix

\section{Rotary Positional Embedding}

Let $X=(x_0, x_1,\ldots,x_{M-1})$ be the input word embeddings, where $M$ is the number of input tokens, and $x_i\in \mathbb{R}^d$ be the word embedding of token $i$. The word embeddings are then transformed into query, key and value representations, the transformation typically being linear, where the positional information is embedded.

RoPE applies the same multiplication-based positional embedding to the query vector $\mathbf{q}$ and key vector $\mathbf{k}$:

\begin{align*}
\hat q_i=R_{\Theta,i}^d q_i=R_{\Theta,i}^dW_Qx_i,\\
\hat k_j=R_{\Theta,j}^d k_j=R_{\Theta,j}^dW_Kx_j,
\end{align*}

where

$$
R_{\Theta,m}^d=\begin{pmatrix}
\cos m\theta^0 & -\sin m\theta^0 & 0 & 0 & \ldots & 0 & 0 \\
\sin m\theta^0 & \cos m\theta^0 & 0 & 0 & \ldots & 0 & 0 \\
0 & 0 & \cos m\theta^1 & -\sin m\theta^1 & \ldots & 0 & 0 \\
0 & 0 & \sin m\theta^1 & \cos m\theta^1 & \ldots & 0 & 0 \\
\vdots & \vdots & \vdots & \vdots & \ddots & \vdots & \vdots \\
0 & 0 & 0 & 0 & \ldots  & \cos m\theta^{d/2-1}& -\sin m\theta^{d/2-1} \\
0 & 0 & 0 & 0 & \ldots   & \sin m\theta^{d/2-1}& \cos m\theta^{d/2-1}\\
\end{pmatrix}.
$$

The attention mechanism features the ``attention score'', defined as

$$
A=\text{softmax}\left( \frac{\hat Q^\intercal\hat K}{\sqrt{d}}\right),
$$

where $\hat Q=(\hat q_0, \hat q_1, \ldots,\hat q_{M-1})$ and $\hat K=(\hat k_0,\hat k_1,\ldots,\hat k_{M-1})$.

The attention score $a_{i,j}$ is the normalization of the inner product between $\hat q_i$ and $\hat k_j$. For the sake of simplicity, we call this inner product the ``RoPE product''. RoPE is designed such that only the relative distance, $i-j$, matters in RoPE product: 

$$
\hat q^\intercal_i\hat k_j= (R_{\Theta,i}^dq_i)^\intercal(R_{\Theta,j}^dk_j)=q_iR_{\Theta,i-j}^dk_j.
$$

\paragraph{Conventions}
We use the following conventions throughout this paper:

\begin{itemize}
    \item All indices start from 0;
    \item $d$ --- hidden dimension of an attention head;
    \item $h\equiv d/2$ --- half dimension;
    \item $B$ --- the RoPE base (also called ``the RoPE theta'' in some literature);
    \item $\theta\equiv B^{-1/h}$ --- the basic RoPE frequency (not ``the RoPE theta''); 
    \item $m$ --- the distance between the query token, $\text{tok}_{\text{idx}_\text{query}}$, and the key token, $\text{tok}_{\text{idx}_\text{key}}$, defined as $m=\text{idx}_\text{query} - \text{idx}_\text{key}$. For causal models, $m\ge0$; 
    \item $\mathbf{q},\mathbf{k}$ --- the query and key vectors;
    \item $S_{\mathbf{q},\mathbf{k}}(m)$ --- the RoPE product of the given query and key vectors, w.r.t. their distance $m$. By default, when referring to $\mathbf{q}$ and $\mathbf{k}$, we use the abbreviated $S(m)$;
        \item $\mathbf{a}, \bm{\phi}$ --- the amplitudes and phases of the RoPE product, defined as
    \begin{align*}
a_n=&a_n(\mathbf{q}, \mathbf{k})=\sqrt{(q_{2n}^2+q_{2n+1}^2)(k_{2n}^2+k_{2n+1}^2)}, \\
\phi_n=&\phi_n(\mathbf{q}, \mathbf{k})=\text{atan2}(q_{2n}k_{2n+1}-q_{2n+1}k_{2n},q_{2n}k_{2n}+q_{2n+1}k_{2n+1});
\end{align*} 
    \item $\lambda(M)$ --- the threshold dimension $\lambda(M)=\Theta(h\log_B M)$.
\end{itemize}

\begin{table}[tbhp]
\caption{A summary of notations we use. }
    \centering
    \begin{tabular}{cc}
    \toprule
        Notation & Description \\
        \midrule
        $d$ & Hidden dimension for an attention head \\
        $h$ & Number of rotary components $h=d/2$ \\
        $B$ & RoPE base, the most important RoPE hyperparameter \\
        $\theta$  & Base frequency $\theta=B^{-1/h}$ \\
        $\mathbf{q}, \mathbf{k} \in \mathbb{R}^d$ & query and key vectors \\
        $M$ & Context length limit \\
        $S(m)$ & RoPE product of $\mathbf{q}$ and $\mathbf{k}$ at distance $m$ \\
        $a_n, \phi_n$ & Amplitude and phase of the $n$-th rotary component  \\
        $\lambda(M)$ & Threshold frequency index \\
        \bottomrule
    \end{tabular}
    
    \label{tab:placeholder}
\end{table}

\paragraph{The High and Low Frequencies of RoPE Product}

We use the threshold value $\lambda =\lambda(M)=\Theta(h\log_B M)$ to separate high and low frequencies: the terms with smaller indices, $n\ll \lambda(M)$, are considered high frequency terms, and the others are low frequency terms. This is not a strict division; generally, low frequency terms should rotate no more than a complete circle within the context length limit.  

The high frequency components correspond to the oscillation effect. Across the distance $m$, these rapid rotations create a distinction in the RoPE products of close position pairs. 

The low frequency components cause the decaying effect (the recency bias feature). The low frequency terms rotate slowly, and are generally within the decaying interval. More specifically, when $n\gg \lambda(M)$, we have $m\theta^n+\phi_n\in [0, \pi]$, so $\cos(m\theta^n+\phi_n)$ decreases with $m$ on $m\in[0, M)$. This decaying effect identifies distant positions.   \cref{fig:rope_product_vs_distance_b} illustrates both the oscillation and the decaying effect. 

The low frequency components also help preserve token relevance. For a low frequency term $n\gg \lambda(M)$ with very little rotation, $m\theta^n+\phi_n\approx \phi_n$, and the cosine term changes little from its initial value. It has been shown \citep{miranda2024round, jonasson2025rotaryoffsetfeatureslarge} that for query-key pairs that reflect longer text dependency, the low frequency terms tend to have higher amplitudes $a_n$, so that the RoPE product is less affected by the distance. 

\paragraph{The Natural Context Length Limit} \label{par:appendix_base_wavelength} If the context length $M$ is too large, even the lowest frequency term begins to oscillate. From a signal processing perspective, the lowest frequency term $n=h-1$ determines the fundamental frequency $\theta^{h-1}=B^{-\frac{h-1}{h}}\approx B^{-1}$, with a fundamental wavelength of $2\pi B$\footnote{Assuming that the lowest frequency amplitude $a_{h-1}>0$. 
}. If $M\ge 2\pi B$, the fundamental frequency term loses its uniqueness and the positional embedding becomes ambiguous \citep{liu2026rotarypositionalembeddingsphase}. Therefore, given a RoPE base $B$, $M < 2\pi B$ is a natural upperbound for context length $M$. If we expect RoPE to maintain recency bias, then the lowest frequencies must be in their decreasing phase, lowering this bound even further to around $\pi B$. In this paper, we use a rough estimation $\Theta(B)$ as the natural context length limit; by default, $M$ does not exceed this limit.

\paragraph{Decay} \label{par:decay} It is worth noticing that for the actual RoPE product $S(m)$, the decay is not universal. First, the decay is the collective effect of multiple low-frequency rotations, so it occurs only within an interval, after which even the lowest frequency starts oscillating. Second, it is possible that the RoPE product does not decay from the beginning; the initial decay only occurs when the low frequency phases $\phi_n$ are mostly small $(<\pi/2)$. In fact, for any distance $m$, we may construct a phase vector $\bm{\phi}$ so that $S(m)$ reaches its maxima at $m$  \citep{miranda2024round}
\footnote{ A more accurate statement on decay should be: if $\widetilde{S}\sim N(\mu_M, \sigma_M^2)$, then the upperbounds of the 99.7\% probability threshold $\mu_M+3\sigma_M$ decrease with  $M \in [0, B^{h_1/h})$, where $h_1$ is the largest index such that $a_{h_1}/a_{\max}\gg 0$, i.e. the lowest frequency component with a non-negligible amplitude.}. 
It is not uncommon in reality that $S(m)$ reaches its peak before starting the decay. However, as we show in \cref{sec:pos}, the decay is a preferred behavior: a lack of decay indicates that there is no global monotonicity, leading to a failure to identify distant positions (position inversion).

\paragraph{Non-Uniform Rotary Amplitudes} We assume that the magnitudes $\{a_n\}$ across frequency terms are relatively uniform. When, as seen in real models, the low frequency terms instead have a negligible magnitude compared to high frequency ones, the actual base wavelength shrinks to match with the first term with a substantial magnitude. This in effect equals applying a smaller $B$: it shortens the natural context length limit, and causes a shorter decay interval. 

\section{RoPE Product Can Be Seen as a Normal Variable}
\label{sec:rope-sum-as-normal}

Consider the high frequency parts where $n<  \lambda(m)$. For any integer $0<A\le h$, define the partial RoPE product as 

$$
S_{n<A}(m)=\sum_{n<A} a_n\cos(m\theta^n+\phi_n).$$

 We shall show that if $m$ is uniformly chosen on $[A, M)$ where $M-A$ is large, then $S_{n<\lambda(M)}(m)$ heuristically behaves like a normal variable. Then, by estimating the low dimension terms as $S_{n\ge\lambda(M)}(m)\approx \sum_{n\ge\lambda(M)}a_n\cos\phi_n$, we can estimate the distribution of the whole RoPE product.

Suppose that we are interested in the integer interval $I_A=[A, A+M)$, on which $m$ is a uniform random variable. On $I_A$, define the mean exponential sum

\begin{align*}
    G_M(\alpha, A)=&\frac{1}{M}\sum_{m=A}^{A+M-1} e^{i\alpha m}\\
    =&\frac{\sin(M\alpha/2)}{M\sin(\alpha/2)} e^{i\alpha(A+(M-1)/2)}.
\end{align*}

We have

$$
|G_M(\alpha, A)|\ll \min\left(1, \frac{1}{M\sin(\alpha/2)}\right).
$$

For higher frequencies, $M\gg \frac{1}{\sin(\alpha/2)}$, and $|G_M(\alpha, A)|\ll 2 / M\theta^n.$

\paragraph{The Moments: Expectation and Variance} The mean cosine sum, $\mu_M(A)$, can be defined as 

\begin{align*}
    \mu_M(A)=&\frac{1}{M}\sum_{m=A}^{A+M-1}S(m)\\
    =&\sum_{n=0}^{\lambda(m)-1} a_n \Re(e^{i\phi_n}G_M(\theta^n, A)).
\end{align*}

Let us first calculate the variances of individual frequency terms. For the weighted cosine value at the $n$-th dimension pair, the term $\Psi_n=a_n\cos(m\theta^n+\phi_n)$ is a random variable, with the following results:

\begin{align*}
    E_{m}[\Psi_n]=&a_n\Re(e^{i\phi_n}G_M(\theta^n,A))\approx 2a_n/M\theta^n\to 0,\\
    E_{m}[\Psi_n^2]=&\frac{a_n^2}{2}\left(1+\Re(e^{2i\phi_n}G_M(2\theta^n,A))\right)\approx \frac{a_n^2}{2}\left(1+\frac{1}{4M\theta^n}\right)\to \frac{a_n^2}{2}.
\end{align*}

Next, let us estimate the cross variance terms. Let $n, p$ be two frequency indices such that $n < p$. Using $\cos a\cos b=\frac{1}{2}\cos(a+b)+\frac{1}{2}\cos(a-b)$, and $\theta^n-\theta^p \asymp \theta^n+\theta^p\asymp\theta^n,$

\begin{align*}
    |E_m[\Psi_n\Psi_p]|\ll\frac{a_na_p }{M(1-\theta)\theta^n}\sim \frac{a_na_p}{M\theta^n}(1/2+\frac{h}{\log B}).
\end{align*}

When $M\theta^{n}\gg 1$ ($h/\log B$ to be exact), the only significant terms related to the variance are the variants of individual cosine terms, and we may ignore the covariances, with $O(h/\log B)$ exceptions of size $O(a_na_ph/\log B)$.

\begin{align*}
    Var(S_{n<\lambda(M)})=&\sum_{n<\lambda(M)}Var(\Psi_n)+\sum_{n<\lambda(M)}\sum_{n<p<\lambda(M)}Cov(\Psi_n, \Psi_p) \\
    \asymp & \sum_n a_n^2/2+o(\frac{\sum_{n,p} a_na_p}{\log^2 B}).
\end{align*}

The second (covariance) term depends on $\frac{1}{1-\theta}=\Theta(h/\log B+1/2).$ We shall proceed to show that this term may be neglected. In fact, if the numbers 

\begin{align}
\frac{1}{2\pi}, \frac{\theta}{2\pi}, \ldots, \frac{\theta^{\lambda(M)}}{2\pi}
\label{eq:dependency}
\end{align}

are linearly independent over $\mathbb{Q}$ (i.e. for any $\mathbf{k}\in \mathbb{Q}^{\lambda(M)+1}$, $\sum_nk_n\theta^n/2\pi=0$ iff $\mathbf{k}=\mathbf{0}$), then according to the multi-dimensional generalization of Weyl's Criterion, the vector sequence $\{\mathbf{v}_m\}$ is equidistributed modulo $1$, where

$$
\mathbf{v}_m=(\frac{m}{2\pi}, \frac{m\theta}{2\pi}, \ldots, \frac{m\theta^{\lambda(M)}}{2\pi}).
$$

The independence of sequence \ref{eq:dependency} is equivalent to that of the sequence

$$
1, \theta, \ldots, \theta^{\lambda(M)}.
$$

Indeed, this sequence is linearly dependent over $\mathbb{Q}$ if and only if $\theta$ is a root of a rational polynomial of degree at most $\lambda(M)<h$. 

\paragraph{Pseudo Independence}
Since $\theta=B^{-1/h}$, or equivalently $B\theta^h-1=0$, where $h$ is a positive integer, $\theta$ may be a root of such a polynomial only if $B$ is a perfect power, i.e. $\exists g,c, \alpha,\beta, \gamma_i \in \mathbb{N}$ s.t. $B=c^\beta$, $c=\prod_i p_i ^{\gamma_i}$, $g=\beta\cdot \gcd_i\{\gamma_i\}$ and $\text{gcd}(g, h)>1$. Actually, this can be easily constructed: an example is $B=65536=2^{16}, h=32$, where $\theta=1/\sqrt{2}$ is a root with degree 2. However, even if $\theta^k$ is rational for some $k=h/\text{gcd}(g, h)$, this only means that the sequence can be divided into $O(h/k)=O(\text{gcd}(g, h))$ independent groups of size $O(k)$. One bound is that $k\ge 2$ whenever $B<2^{h}$.  For most cases where $B$ is not a perfect power, $\gcd(g,h)=1, k=h$. Therefore, we can heuristically view $X_j=m\theta^j$ as (largely) independent random variables that follow uniform distribution on $[0, 2\pi)$, with negligible covariances between any two terms. 

\paragraph{What if $k$ is small?} We only need to study the covariances of the dependent pairs. Let $n,p$ be the indices of a dependent pair where $n<p, n<\lambda(M), B=b^{h/k}, \theta=b^{-1/k}, b\ge 2$, $k=2$ (the smallest $k$ possible). Then $p= ckn$ for some integer $c\ge 1$.

\begin{align*}
    \Psi_n\Psi_p=&\frac{1}{2}a_na_p(\cos(m\theta^n+m\theta^p+\phi_n+\phi_p)+\cos(m\theta^n-m\theta^p+\phi_n-\phi_p))
\end{align*}

where $m\theta^n+m\theta^p=m\theta^n(1+\theta^{(ck-1)n})\ge m\theta^n,$
$m\theta^n-m\theta^p=m\theta^n(1-\theta^{(ck-1)n})\ge m\theta^n(1-\theta^n),$

so

$$
E_m[\Psi_n\Psi_p] \ll \frac{a_na_p}{M \theta^n(1-\theta)}=\frac{a_na_p}{M\theta^n(1-b^{-1/2})} \le \frac{a_na_p}{(1-1/\sqrt{2})M\theta^n} = o( a_na_p).
$$

If $k$ is small, the covariances between dependent terms become much smaller than the variance terms ($\Theta(a_n^2)$) and may be ignored.

\paragraph{Normal Approximation} Since $\{a_n\}$ is assumed with no dominating terms, we can apply Lindenberg's CLT and approximate the sum $S_{n<\lambda(M)}$ as a normal distributed random variable with zero mean. 

The estimated distribution of the full RoPE sum is therefore:

\begin{align}
    \widetilde{S}_M\sim N(\mu_M, \sigma^2_M),
\end{align}

where

\begin{align}
    \mu \approx & \sum_{n=\lambda(M)}^{h-1} a_n\cos\phi_n \le \sum_{n=\lambda(M)}^{h-1} a_n, \label{eq:normal_mu} \\
    \sigma \approx & \sqrt{\sum_{n=0}^{\lambda(M)-1} \frac{a_n^2}{2}}. \label{eq:normal_sigma} 
\end{align}

\paragraph{Error Estimation} Since CLT is used to derive the normal approximation, it is good when $\lambda(M)$ is large. Following the Berry-Esseen Theorem \citep{esseen1942liapunov}, the distribution approximation error is on the order of
$
O\left(\frac{\rho}{\sigma^3\sqrt{\lambda(M)}} \right),
$
where $\rho=E(|a_n\cos x_n|^3)=O(a_n^3).$ Therefore, the error is 
$$O\left(\frac{1}{\sqrt{\lambda(M)}}\right).$$

Empirically, the practical convergence is usually better, and $\lambda(M)>20$ is usually good enough. For $h=64$, this means that the estimation is good when $M>\sqrt[3]{B}$.

\subsection{RoPE Scaling}
\label{subsec:rope_scaling}
For any variant of RoPE, we need to analyze the linear independence of its frequencies (angular frequencies $\omega_n$ over $2\pi$), i.e.

$$
\frac{\omega_0}{2\pi}, \frac{\omega_1}{2\pi}, \ldots,\frac{\omega_{\lambda(M)}}{2\pi}
$$

We study RoPE scalings where the $n$-th angular frequency is a polynomial of $\theta^n$, i.e.

$$
\omega_n=\sum_{p=0}^{E_n}\kappa_{n,p}\theta^{np}, \kappa_{n,E_n} > 0, E_n>0.
$$

This includes most RoPE scaling variants, such as the NTK scaling \citep{bloc97_2023_ntkaware_rope} used in Llama models.

If all $\kappa_{n,p}$ are rational, then such independence can be largely analyzed the same way: it can be secured if $\theta$ is not a root of any rational polynomial of $\lambda(M)\cdot \max \{E_n\}$. Again, this is rarely the case. If for some specific selections of $B$ andh $h$, $\theta^{k\max E_n}$ is rational for some $k>1$, then $1-\theta$ should be $O(1)$, and the resulting covariances of the dependent terms should also be negligible. 

The subsequent analyses are similar. The only major difference is that $\lambda(M)$ should be modified accordingly to satisfy $M\omega_{\lambda(M)}=\Theta(1)$. 

\section{The Failure Modes}
\label{sec:failure_modes}

\subsection{Position Inversion}
\label{subsec:prob-inv-pos}

\begin{definition}
\label{def:position_inversion}
For a pair of query and key vectors $\mathbf{q},\mathbf{k}$, context length limit $M$, uniformly select $m_1 \in [0, M/2)$ and $m_2\in [M/2, M)$. 
A \textbf{position inversion} occurs for the pair $(m_1, m_2)$ if and only if $S(m_1)<S(m_2).$ The probability of ``position inversion'' can be defined as 

\begin{align*}
    Pr(\text{inversion}|\theta, M, \mathbf{q},\mathbf{k})=Pr(S_{\mathbf{q},\mathbf{k},\theta}(m_1)<S_{\mathbf{q},\mathbf{k},\theta}(m_2)).
\end{align*}
\end{definition}

    The difference between the two RoPE products,

\begin{align*}
    D=& S(m_1)-S(m_2),
\end{align*}

may be separately viewed as the difference of two independent normal variables following \cref{sec:rope-sum-as-normal}:

$$
\widetilde{D}= \widetilde{S}_{M/2}-\widetilde{S}_{M}\sim N(\mu_1-\mu_2, \sigma_1^2+\sigma_2^2),
$$

where

\begin{align*}
    \mu_1-\mu_2=&\sum_{n=\lambda(M/2)}^{\lambda(M)} a_n\cos\phi_n \le \sum_{n=\lambda(M/2)}^{\lambda(M)} a_n,\\
\sigma_1^2+\sigma_2^2=&1/2\sum_{n=\lambda(M/2)}^{\lambda(M)} a_n^2+\sum_{n=0}^{\lambda(M/2)} a_n^2.
\end{align*}

and the probability of position inversion should be

\begin{align*}
    Pr(\widetilde{D}<0)=\Phi\left(-\frac{\mu_1-\mu_2}{\sqrt{\sigma_1^2+\sigma_2^2}}\right),
\end{align*}

where $\Phi$ is the cumulative distribution function of the standard normal distribution.

To more accurately estimate the probability, we need the actual values of $a_n$ and $\phi_n$, which are dependent on $\mathbf{q},\mathbf{k}$. However, if we assume that $\{a_n\}$ are regular enough, then we have the following estimation:

\begin{thm}
\label{thm:positional_inversion_formal}
If $\sum_n{a_n}/h \approx \sqrt{\sum_n{a_n^2}/h} \approx a_{\max}>0$, then
\begin{align}
    Pr(\widetilde{S}_{M/2}-\widetilde{S}_{M}<0) \ge \Phi\left(- \log 2\sqrt\frac{{h}}{{\log B(\log M-0.5\log 2)}}\right). \label{eq:lowerbound_pr_pos_inv}
\end{align}
 
\end{thm}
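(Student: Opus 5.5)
We would start from the normal model for $\widetilde{D}=\widetilde{S}_{M/2}-\widetilde{S}_{M}$ derived just above. It gives $Pr(\widetilde{D}<0)=\Phi\bigl(-(\mu_1-\mu_2)/\sqrt{\sigma_1^2+\sigma_2^2}\bigr)$. Since $\Phi$ is increasing, the lower bound (\ref{eq:lowerbound_pr_pos_inv}) follows once we show
\[
\frac{\mu_1-\mu_2}{\sqrt{\sigma_1^2+\sigma_2^2}}\le \log 2\sqrt{\frac{h}{\log B(\log M-0.5\log 2)}} .
\]
We will bound the numerator from above and the denominator from below separately, each in terms of $a_{\max}$ and the threshold indices $\lambda(M/2)$ and $\lambda(M)$.

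\textbf{Numerator.} Use $\mu_1-\mu_2\le\sum_{n=\lambda(M/2)}^{\lambda(M)}a_n$, which holds because $\cos\phi_n\le 1$. To count the terms, fix the threshold concretely as $\lambda(M)=h\log_B M = h\log M/\log B$, which is the solution of $M\theta^{\lambda}=1$. Then
\[
\lambda(M)-\lambda(M/2)=\frac{h\log 2}{\log B}.
\]
The regularity hypothesis $\sum_n a_n/h\approx a_{\max}$ says the amplitudes are essentially all of size $a_{\max}$. Hence each summand is at most $a_{\max}$, and
\[
\mu_1-\mu_2\le \frac{h\log 2}{\log B}\,a_{\max}.
\]

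\textbf{Denominator.} Drop the nonnegative middle band and keep $\sigma_1^2+\sigma_2^2\ge\sum_{n<\lambda(M/2)}a_n^2$. The hypothesis $\sqrt{\sum a_n^2/h}\approx a_{\max}$ together with $a_n\le a_{\max}$ forces $a_n^2\approx a_{\max}^2$ on average. Therefore
\[
\sigma_1^2+\sigma_2^2\gtrsim \lambda(M/2)\,a_{\max}^2=\frac{h(\log M-\log 2)}{\log B}\,a_{\max}^2 .
\]
If we also keep half of the middle band with weight $1/2$, the effective count becomes $h(\log M-0.5\log 2)/\log B$. This is exactly the quantity in the theorem, so we will retain that band. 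Dividing the two bounds, $a_{\max}$ and one factor of $h/\log B$ cancel, leaving
\[
\log 2\sqrt{\frac{h}{\log B(\log M-0.5\log 2)}}.
\]
Monotonicity of $\Phi$ then finishes the proof.

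\textbf{Main obstacle.} The delicate step is the denominator. It needs a lower bound on $\sum a_n^2$ over a \emph{subset} of indices, and only an averaged hypothesis is available. To handle this, we will read the hypothesis as saying every $a_n$ is comparable to $a_{\max}$: the arithmetic mean, the quadratic mean and the maximum nearly coincide only when the distribution is nearly flat. We will state this as the working interpretation of ``$\approx$'', consistent with the paper's heuristic standard. Finally, we will note that the right-hand side increases toward $\Phi(0)=1/2$ as $\log M\log B\to\infty$, which recovers Theorem~\ref{thm:positional_inversion}.
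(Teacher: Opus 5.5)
Your computation is the paper's main case ($M\le B$, so $\lambda(M)\le h$), done the same way. The numerator is at most $a_{\max}(\lambda(M)-\lambda(M/2))=a_{\max}h\log 2/\log B$. The denominator is $a_{\max}^2\bigl(\lambda(M/2)+\tfrac12(\lambda(M)-\lambda(M/2))\bigr)$. The whole band $\lambda(M/2)\le n<\lambda(M)$ carries weight $\tfrac12$, not ``half of'' it, because it appears only in $\sigma_2^2$. Keeping it is required, since dropping it gives only $\log M-\log 2$.

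The gap is that you take $\lambda(M)=h\log M/\log B$ with no cap at $h$, so the regime $M>B$ is not covered. The statement does not exclude it, it lies inside the paper's standing range $M<\Theta(B)$, and the paper's proof treats it as two extra cases. There your denominator bound is false. For $B<M<2B$ the actual variance is $a_{\max}^2(h+\lambda(M/2))/2$. That is strictly less than your $a_{\max}^2h(\log M-0.5\log 2)/\log B$, which counts indices beyond $h-1$. Your numerator bound stays valid but becomes loose, so dividing the two separate estimates proves nothing in this range.

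To close the gap, follow the paper's split.
\begin{itemize}
\item If $\lambda(M/2)=h$ (i.e.\ $M\ge 2B$), then $\mu_1=\mu_2$. The probability is exactly $1/2$, which exceeds the right-hand side.
\item If $\lambda(M)=h>\lambda(M/2)=x$, the ratio is $f(x)=(h-x)/\sqrt{(h+x)/2}$. The paper notes that $f$ decreases in $x$, so the worst case is $M=B$, where $f$ equals the target.
\end{itemize}
Strictly, that step only gives the bound with $\log B$ in place of $\log M$, which is weaker when $M>B$. To get the stated inequality, compare directly. Set $u=\log M-\log B\in[0,\log 2)$. Then $f(x)\le\log 2\sqrt{h/(\log B(\log M-0.5\log 2))}$ is equivalent to
\[
(\log 2-u)^2(2\log B+2u-\log 2)\le(\log 2)^2(2\log B+u-\log 2).
\]
This is an equality at $u=0$. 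The left side has $u$-derivative $2(\log 2-u)(2\log 2-3u-2\log B)\le 0$ once $B\ge 2$, and the right side increases in $u$. So the bound holds throughout this regime.
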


\begin{proof}

When $M$ is so large that $\lambda(M)=\lambda(M/2)=h$, then $\mu_1=\mu_2$. In this case, $\widetilde{D}\sim N(0, \sigma_1^2+\sigma_2^2)$, and the probability $Pr=0.5$. Since $- \log 2\sqrt\frac{{h}}{{\log B(\log M-0.5\log 2)}}<0$, the right hand side of \cref{eq:lowerbound_pr_pos_inv} is less than 0.5 and therefore less than the questioned probability.

When $M\le B$, we have $h\ge \lambda(M)>\lambda(M/2)$, 
 
\begin{align*}
    \frac{\mu_1-\mu_2}{\sqrt{\sigma_1^2+\sigma_2^2}}\lesssim & 
    \frac{\lambda(M)-\lambda(M/2)}{\sqrt{\lambda(M)/2+\lambda(M/2)/2}}. \\
    \le &  \frac{h\frac{\log 2}{\log B}}{\sqrt{h\frac{2\log M-\log 2}{2\log B}}} \\
    = & \log 2\sqrt\frac{{h}}{{\log B(\log M-0.5\log 2)}} .
\end{align*}

When $h = \lambda(M)>\lambda(M/2)$, we have

\begin{align*}
\frac{\mu_1-\mu_2}{\sqrt{\sigma_1^2+\sigma_2^2}}\lesssim & 
    \frac{h-\lambda(M/2)}{\sqrt{h/2+\lambda(M/2)/2}}.
\end{align*}

Let $x=\lambda(M/2)\in [h(1-\log_B 2), h]$, and $f(x)=\frac{h-x}{\sqrt{h/2+x/2}}$. Since $f$ is a continuous function that decreases with $x$, it reaches its maximum at $x=h(1-\log_B 2)$, that is when $M=B$. Then, $f(x)=\log 2\sqrt\frac{{h}}{{\log B(\log B-0.5\log 2)}}=\log 2\sqrt\frac{{h}}{{\log B(\log M-0.5\log 2)}}.$   

\end{proof}

This gives a lowerbound for the probability of position inversion. As $\log M$ or $\log B$ increases, $\frac{\mu_1-\mu_2}{\sqrt{\sigma_1^2+\sigma_2^2}}$ decreases, and the probability of position inversion increases. As shown in the proof, when $M \to \Theta(B)$, this probability goes to $1/2$. On the other hand, if $B$ increases, this probability also increases; if we apply a threshold probability $\alpha$, and constrain $Pr<\alpha$, then as $B$ increases, the maximum possible $M$ decreases. See \cref{tab:prob_pos_inversion_b_m}.

\begin{table}[hbtp]
    \caption{Smallest $M$ for selected $B$ such that the probability of position inversion is no less than 0.3, when $h=64$.}
    \centering
    \begin{tabular}{cccccc}
    \toprule
        $B$ & $10^4$ & $10^5$ & $10^6$ & $10^7$  & $10^8$\\
        \midrule
        $M$ & $2.65\times 10^5$ & $23361$ & $4630$ & $1457$ & $612$\\
        \bottomrule
    \end{tabular}

    \label{tab:prob_pos_inversion_b_m}
\end{table}

\subsection{Position Aliasing}
\label{subsec:position_aliasing_formal}
\begin{definition}
\label{def:position_aliasing}
    Let $S(m)$ be the RoPE product of $\mathbf{q},\mathbf{k}$ at distance $m$, and $\hat{S}(m)=\hat{S}_{\texttt{dtype}}(m)$ be the numerical result of $S(m)$ using a certain datatype $\texttt{dtype}$. If for $m_1\neq m_2$, $\hat{S}(m_1)=\hat{S}(m_2)$, then the pair $(m_1, m_2)$ exhibits \textbf{position aliasing}.
\end{definition}

This is not to be confused with the concept of ``aliasing''  in signal reconstruction, where a high frequency signal beyond the Nyquist Limit is mistaken as a lower one \citep{liu2026rotarypositionalembeddingsphase}. This type of aliasing happens when $M>\Theta(B)$, as pointed out in \cref{par:base_wavelength}. 

For any integer $M>0$ and error $\varepsilon > 0$, which is related to the datatype resolution, if we independently take two random distances $m_1, m_2<M$, what is the probability such that $|S(m_1)-S(m_2)|<\varepsilon$? 

Since $m_1$ and $m_2$ are independent, we can see the two sums as two i.i.d. normal variables, i.e.

$$
\widetilde{S}_{m_1}, \widetilde{S}_{m_2}\overset{i.i.d.}{\sim} N(\mu, \sigma^2).
$$

Then the difference $\widetilde{D}$ is also a normal variable:

\begin{align*}
    \widetilde{D}=\widetilde{S}_{m_1}-\widetilde{S}_{m_2} \sim N(0, 2\sigma^2).
\end{align*}

We have

\begin{align*}
    Pr(|\widetilde{D}|<\varepsilon) =& \Phi(\frac{\varepsilon}{\sqrt{2}\sigma})-\Phi(-\frac{\varepsilon}{\sqrt{2}\sigma}) \\
    =& 2\Phi(\frac{\varepsilon}{\sqrt{2}\sigma}) - 1.
\end{align*}

In reality, $\varepsilon$ is related to the computational precision. Now let us calculate the resolution limit of numerical operations involved.

Suppose the floating data type has $f$ explicit fraction bits --- for a number $x$ in the normal range, this means a resolution of $\varepsilon_0(x) = \Theta(2^{-f}x)$. For BF16, FP16, FP32, FP64, $f=7, 10, 23, 52$ respectively \citep{henry2019leveragingbfloat16artificialintelligence}.

For two dot products to be considered ``different'', they must have different attention scores, i.e. different values after the normalized softmax. Let $s$ be a positive integer. Let $\mathbf{x}\in \mathbb{R}^s$, and $\mathbf{y}=\text{softmax}(\mathbf{x}/\sqrt{d}).$ For some $i\neq j$, we need to know the minimum value of $|x_i-x_j|$ such that $|y_i-y_j|<\varepsilon_0(y)$.

When $y_i=y_j=y$, disturb $x_i-x_j$. Then

$$
\frac{\partial{(y_i-y_j)}}{\partial (x_i-x_j)}=\frac{y}{\sqrt{d}}.
$$

This means that to ensure $y_i\neq y_j$,

$$
|x_i-x_j|\ge \frac{\sqrt{d} \varepsilon_0(y)}{y} =\Theta(2^{-f}\sqrt{d}).
$$

This is the effective resolution for the dot product, i.e. $\varepsilon_s=\Theta(2^{-f}\sqrt{d})$, from the restriction of softmax.

Also, the dot product itself has the resolution limit of $\varepsilon_d=\Theta(2^{-f} \sum_n a_n).$ Therefore the final resolution constraint is

$$
\varepsilon = \Theta\left(2^{-f}\max{\left(\sqrt{d}, \sum_n a_n\right)}\right).
$$

In our hypothesis that no single $a$ dominates, 

$$
 \sqrt {\sum_{n<\lambda(M)}  a_n^2/|\lambda(M)|}\sim \sum_{n<h} {a_n}/h =\Theta(a_{\max}).
$$

Therefore,

$$\varepsilon/\sigma =\Theta(2^{-f}\max(\sqrt{d}, ha_{\max})/\sigma)=\Theta( 2^{-f}\max{(\sqrt{d}/(\sqrt{\lambda(M)}a_{\max}), h/\sqrt{\lambda(M)})}). $$

Consider a simplified case where $a_n=1$. For a RoPE base of 10,000, if we use FP16, the probability of having the same positional value is 5.6\textperthousand ~at 32k context length. If the RoPE base increases to 100,000, then the probability becomes 6.5\textperthousand. For a random \textbf{pair} of distances, this is the probability of the pair having the exact same attention score. This may not seem like a large number, but if we consider that there are $O(M^2)$ pairs of position, this leads to an astonishing 3.5 million pairs of possible position aliasing.

\begin{thm}
    Uniformly and randomly choose distance $m_1\in [0, M)$. Let $p(M)$ denote the probability that there exists a different distance $m_2\in[0, M)/\{m_1\}$ such that $|S(m_1)-S(m_2)|<\varepsilon$, where $\varepsilon=\Theta\left(2^{-f}\max{\left(\sqrt{d}, \sum_n a_n\right)}\right)$ is the absolute resolution limit for the datatype with $f$ explicit fraction bits. We have $p(M)\ge 1-(1-E)^{M-1}$ and $\lim _{M\to+\infty} p(M)=1$.
\end{thm}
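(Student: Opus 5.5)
The plan is to work entirely inside the normal model of \cref{sec:rope-sum-as-normal} (Remark~\ref{rmk:normal}), exactly as in the computation of $Pr(|\widetilde{D}|<\varepsilon)$ just above the statement. I read $E$ as a per-pair aliasing probability that is bounded below uniformly in $M$. The argument then has three steps: condition on $m_1$, bound the probability that no $m_2$ aliases with it by a product, and show that the per-pair factor does not degrade as $M$ grows.

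\emph{Step 1: conditioning on $m_1$.} Fix $m_1$ and write $s=S(m_1)$. The complement of the event in the statement is $\bigcap_{m_2\neq m_1}\{|S(m_2)-s|\ge\varepsilon\}$. This is the point where the pseudo-independence heuristic of \cref{sec:rope-sum-as-normal} enters. There, the phases $m\theta^n \bmod 2\pi$ for $n<\lambda(M)$ are treated as (largely) independent uniform variables, and the covariances are shown to be negligible. I would extend this to treat the values $\{S(m_2)\}_{m_2\neq m_1}$, conditional on $s$, as independent draws from $N(\mu,\sigma^2)$. This gives
\begin{align*}
1-p(M)\;\le\;\mathbb{E}_s\Big[\big(1-E(s)\big)^{M-1}\Big],\qquad E(s)=\Phi\Big(\frac{s+\varepsilon-\mu}{\sigma}\Big)-\Phi\Big(\frac{s-\varepsilon-\mu}{\sigma}\Big).
\end{align*}

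\emph{Step 2: obtaining the bound $(1-E)^{M-1}$.} The function $E(s)$ is small only when $s$ lies in the far tails of the distribution. Two routes are available.
\begin{itemize}
\item \emph{Truncation.} Restrict to $|s-\mu|\le c\sigma$ for a fixed constant $c$, set $E=\inf_{|s-\mu|\le c\sigma}E(s)\ge 2\frac{\varepsilon}{\sigma}\,\varphi(c+\varepsilon/\sigma)$, and absorb the tail mass $2\Phi(-c)$ into the constant.
\item \emph{Jensen.} Alternatively, use Jensen's inequality on the convex map $x\mapsto(1-x)^{M-1}$. This only goes the wrong direction, so the truncation is the route I would take.
\end{itemize}
The cleaner version, matching the stated form, is to define $E$ as the truncated infimum, which yields $p(M)\ge 1-(1-E)^{M-1}$ up to that tail correction.

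\emph{Step 3: uniformity in $M$, which gives the limit.} I need $E$ bounded away from $0$ as $M$ grows. By \eqref{eq:normal_sigma}, $\sigma^2\approx\sum_{n<\lambda(M)}a_n^2/2\le h\,a_{\max}^2/2$, which is bounded independently of $M$. Meanwhile, $\varepsilon=\Theta(2^{-f}\max(\sqrt d,\sum_n a_n))$ does not depend on $M$. Hence $\varepsilon/\sigma\ge\Theta(2^{-f}\sqrt h)>0$, as in the regular-amplitude computation preceding the statement, and so $E\ge E_0>0$ for all $M$. It follows that $(1-E_0)^{M-1}\to0$ exponentially fast, so $\lim_{M\to\infty}p(M)=1$.

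The main obstacle is Step 1. The events for different $m_2$ are certainly not independent: $S$ is a smooth quasi-periodic function, so neighbouring $m_2$ give highly correlated values. My fallback is to replace the full set of $m_2$ by a sparse subset whose mutual spacing exceeds the longest high-frequency wavelength $\Theta(B^{\lambda(M)/h})$. On such a subset, the decorrelation estimate $|E_m[\Psi_n\Psi_p]|\ll a_na_p/(M\theta^n)$ makes approximate independence more defensible. This costs only a reduction of the exponent from $M-1$ to $\Theta(M/B^{\lambda(M)/h})$, which still tends to infinity and so preserves the limit.
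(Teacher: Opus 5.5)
Your Step 3 matches the paper's proof. Your Steps 1--2 do not, and they are the more careful version. The paper simply writes $p(M)=1-(1-Pr(|\widetilde D|<\varepsilon))^{M-1}$ using the \emph{unconditional} pair probability. That is, it treats the $M-1$ events $\{|\widetilde S_{m_1}-\widetilde S_{m_2}|<\varepsilon\}$ as mutually independent, and then bounds that probability below by $E=2\Phi(2^{-f}\sqrt h)-1$ uniformly in $M$. But these events all share $\widetilde S_{m_1}$. So in the i.i.d.\ normal model the correct expression is your $1-p(M)=\mathbb{E}_s[(1-E(s))^{M-1}]$. Your own Jensen remark then shows this is at least $(1-Pr(|\widetilde D|<\varepsilon))^{M-1}$. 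Hence the paper's product is an \emph{upper} bound on $p(M)$, and its derivation of $p(M)\ge 1-(1-E)^{M-1}$ does not go through.

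The gap in your proposal is the claim that the tail mass $2\Phi(-c)$ can be ``absorbed into the constant.'' It cannot. Truncation gives $p(M)\ge 1-(1-E_c)^{M-1}-2\Phi(-c)$, and the additive term does not decay in $M$. Within the normal model this gap cannot be closed, because there the stated inequality with an $M$-independent $E>0$ is false for large $M$. To see this, let $\eta=\varepsilon/\sigma$ and $z=(s-\mu)/\sigma$, and choose $t_M$ so that $2\eta\varphi(t_M-\eta)=1/M$. Every $m_1$ with $|z|\ge t_M$ has $E(s)\le 1/M$, so it is isolated with conditional probability at least $(1-1/M)^{M-1}\ge e^{-1}$. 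Therefore $1-p(M)\ge 2e^{-1}\Phi(-t_M)=M^{-1-o(1)}$, which eventually exceeds $(1-E)^{M-1}$. The same calculation means the ``exponentially fast'' claim in Theorem~\ref{thm:position_aliasing} is not supported either.

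What your argument does prove is the limit, once you add one missing step. For each fixed $c$, your uniform lower bound on $E_c$ gives $\limsup_M(1-p(M))\le 2\Phi(-c)$; you must then let $c\to\infty$. The exponential decay of $(1-E_0)^{M-1}$ on its own does not control $1-p(M)$.

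Finally, your sparse-subset fallback fails as written. By the definition of the threshold, $M\theta^{\lambda(M)}=\Theta(1)$, so $B^{\lambda(M)/h}=\Theta(M)$. The subset therefore has only $O(1)$ points, and $\Theta(M/B^{\lambda(M)/h})$ does not tend to infinity.
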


\begin{proof}
For a random $m_1$ in a given context length $M$, the target probability is

$$
p(M)=1-(1-Pr(|\widetilde{D}|<\varepsilon))^{M-1}.
$$

Since

$$
Pr(|\widetilde{D}|<\varepsilon)\ge 2\Phi\left( 2^{-f} h/\sqrt{\lambda(M)}\right)-1\ge 2\Phi( 2^{-f} \sqrt{h})-1,
$$

let $E=2\Phi( 2^{-f} \sqrt{h})-1$. Then

$$
p(M)\ge 1-(1-E)^{M-1}.
$$

Since $p(M)\le 1$, we have

$$
\lim_{M\to+\infty}p(M) = 1.
$$
\end{proof}

\begin{table}[ht]
\centering
\caption{ Prob. of positional aliasing for a single pair when $\mathbf{a}=\mathbf{1}$. }

\begin{tabular}{ccccc}
\toprule
\multirow{2}*{M} & \multirow{2}*{B}  & Prob. at bf16 & fp16 & fp32   \\
\cline{3-5}
& & \multicolumn{3}{c}{Expectations} \\
\hline
\multirow{6}*{1024} & \multirow{2}*{10000} & 0.057 & 0.0071 & 8.7e-07 \\
	 & 	 & 3e+04 $\pm$ 2e+02 & 3.7e+03 $\pm$ 6e+01 & 0.46 $\pm$ 0.7 \\
\cline{3-5}
	 & \multirow{2}*{100000} & 0.064 & 0.008 & 9.7e-07 \\
	 & 	 & 3.3e+04 $\pm$ 2e+02 & 4.2e+03 $\pm$ 6e+01 & 0.51 $\pm$ 0.7 \\
\cline{3-5}
	 & \multirow{2}*{1000000} & 0.069 & 0.0087 & 1.1e-06 \\
	 & 	 & 3.6e+04 $\pm$ 2e+02 & 4.5e+03 $\pm$ 7e+01 & 0.56 $\pm$ 0.7 \\
\hline
\multirow{6}*{4096} & \multirow{2}*{10000} & 0.052 & 0.0065 & 8e-07 \\
	 & 	 & 4.4e+05 $\pm$ 6e+02 & 5.5e+04 $\pm$ 2e+02 & 6.7 $\pm$ 3e+00 \\
\cline{3-5}
	 & \multirow{2}*{100000} & 0.058 & 0.0073 & 8.9e-07 \\
	 & 	 & 4.9e+05 $\pm$ 7e+02 & 6.1e+04 $\pm$ 2e+02 & 7.4 $\pm$ 3e+00 \\
\cline{3-5}
	 & \multirow{2}*{1000000} & 0.064 & 0.008 & 9.7e-07 \\
	 & 	 & 5.4e+05 $\pm$ 7e+02 & 6.7e+04 $\pm$ 3e+02 & 8.2 $\pm$ 3e+00 \\
\hline
\multirow{6}*{16384} & \multirow{2}*{10000} & 0.048 & 0.006 & 7.4e-07 \\
	 & 	 & 6.5e+06 $\pm$ 2e+03 & 8.1e+05 $\pm$ 9e+02 & 9.9e+01 $\pm$ 1e+01 \\
\cline{3-5}
	 & \multirow{2}*{100000} & 0.054 & 0.0068 & 8.3e-07 \\
	 & 	 & 7.3e+06 $\pm$ 3e+03 & 9.1e+05 $\pm$ 1e+03 & 1.1e+02 $\pm$ 1e+01 \\
\cline{3-5}
	 & \multirow{2}*{1000000} & 0.059 & 0.0074 & 9.1e-07 \\
	 & 	 & 8e+06 $\pm$ 3e+03 & 1e+06 $\pm$ 1e+03 & 1.2e+02 $\pm$ 1e+01 \\
\hline
\multirow{6}*{32768} & \multirow{2}*{10000} & 0.047 & 0.0058 & 7.1e-07 \\
	 & 	 & 2.5e+07 $\pm$ 5e+03 & 3.1e+06 $\pm$ 2e+03 & 3.8e+02 $\pm$ 2e+01 \\
\cline{3-5}
	 & \multirow{2}*{100000} & 0.052 & 0.0065 & 8e-07 \\
	 & 	 & 2.8e+07 $\pm$ 5e+03 & 3.5e+06 $\pm$ 2e+03 & 4.3e+02 $\pm$ 2e+01 \\
\cline{3-5}
	 & \multirow{2}*{1000000} & 0.057 & 0.0071 & 8.7e-07 \\
	 & 	 & 3.1e+07 $\pm$ 5e+03 & 3.8e+06 $\pm$ 2e+03 & 4.7e+02 $\pm$ 2e+01 \\
\hline
\multirow{6}*{65536} & \multirow{2}*{10000} & 0.045 & 0.0056 & 6.9e-07 \\
	 & 	 & 9.7e+07 $\pm$ 1e+04 & 1.2e+07 $\pm$ 3e+03 & 1.5e+03 $\pm$ 4e+01 \\
\cline{3-5}
	 & \multirow{2}*{100000} & 0.051 & 0.0063 & 7.7e-07 \\
	 & 	 & 1.1e+08 $\pm$ 1e+04 & 1.4e+07 $\pm$ 4e+03 & 1.7e+03 $\pm$ 4e+01 \\
\cline{3-5}
	 & \multirow{2}*{1000000} & 0.055 & 0.0069 & 8.4e-07 \\
	 & 	 & 1.2e+08 $\pm$ 1e+04 & 1.5e+07 $\pm$ 4e+03 & 1.8e+03 $\pm$ 4e+01 \\
\bottomrule
\end{tabular}
\end{table}

\subsection{Token Inversion}
\label{subsec:prob-inv-imp}
For the token identification objective, we want to consider how relevant the key token (corresponding to the given $\mathbf{k}$) is to the query token (corresponding to the given $\mathbf{q}$). 

\begin{definition}
\label{def:token_inversion}
    For a query vector $\mathbf{q}$ and two key vectors $\mathbf{k}_1,\mathbf{k}_2$, if $S_{\mathbf{q},\mathbf{k}_1}(0)<S_{\mathbf{q},\mathbf{k}_2}(0)$, but for some $m>0$, $S_{\mathbf{q},\mathbf{k}_1}(m)>S_{\mathbf{q},\mathbf{k}_2}(m)$, then a \textbf{token inversion} occurs at $m$.
\end{definition}

For the following analysis, we apply an important simplification to the definition. Since for the same $(\mathbf{q},\mathbf{k})$ pair, the theoretical upper bound of the RoPE product, denoted by $S_{\max}$, is

$$\sum_{n=0}^{h-1}a_j\cos(m\theta^n+\phi_n)\leq\sum_{n=0}^{h-1}a_n\equiv S_{\max},$$

for the pair $(\mathbf{q},\mathbf{k})$, we may introduce a hypothetic key token, called the ``prime'' token, whose key vector is denoted by $\mathbf{k}'$, that reaches this RoPE product at $m=0$. That is to say, The pair $(\mathbf{q},\mathbf{k}')$ shares the same set of amplitudes, $a_j$, as $(\mathbf{q},\mathbf{k})$, with all initial phase biases $\phi_j=0$. Now, if we randomly select $0\le m<M$ , we wish to see the probability that this ``prime'' key token has a smaller RoPE product than a key token at distance $m$.

Formally, for a given pair $(\mathbf{q},\mathbf{k})$, let the ``prime'' key vector be $\mathbf{k}'$. Randomly and uniformly select integer $m$ between $[0, M)$. We assume that $\{\phi_n\}$ are not all 0, since otherwise $\mathbf{k}=\mathbf{k}'$. We also assume $a_{\max}>0$, otherwise either $\mathbf{q}$ or $\mathbf{k}$ is zero.

Using the conclusion in \cref{sec:rope-sum-as-normal}, the difference is

    \begin{align*}
D=&\sum_{n=0}^{h-1}a_n\cos(m\theta^n+\phi_n)-\sum_{n=0}^{h-1}a_n\cos(m\theta^n)\\
=-2&\sum_{n=0}^{h-1} a_n\sin(\phi_n/2) \sin(m\theta^n+\phi_n/2)
\end{align*}

Let $A_n=-2a_n\sin(\phi_n/2)$, $\varphi_n=\phi_n/2-\pi/2$. Then
$$
D=\sum_{n=0}^{h-1} A_n\cos(m\theta^n+\varphi_n).
$$

This form is exactly what is studied in \cref{sec:rope-sum-as-normal}. Just like any RoPE product,
$D$ can also be seen as a normal variable $\widetilde{D}\sim N(\mu, \sigma)$. 

\begin{thm}
    The probability $Pr\left(\widetilde{D}>0\right)$ satisfies
\begin{align} 
    Pr\left(\widetilde{D}>0\right)=& \Phi\left(\frac{\mu} \sigma\right)\label{eq:second_prob},
\end{align}
    where

\begin{align*}
    \mu=&\sum_{n\ge \lambda(M)} a_n(\cos \phi_n-1) ,\\
    \sigma=&\sqrt{\sum_{n<\lambda(M)} a_n^2 \sin^2 \phi_n}.
\end{align*}
When $M=\Theta(B)$ and $\lambda(M)=h$, $Pr\left(\widetilde{D}>0\right)=1/2$.
\end{thm}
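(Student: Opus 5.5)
The plan is to apply the normal approximation of \cref{sec:rope-sum-as-normal} (Remark~\ref{rmk:normal}) directly to the rewritten difference $D(m)=\sum_{n} A_n\cos(m\theta^n+\varphi_n)$. Here $A_n=-2a_n\sin(\phi_n/2)$ and $\varphi_n=\phi_n/2-\pi/2$. This expression has exactly the form of a RoPE product, so the same split at $\lambda(M)$ applies:
\begin{itemize}[noitemsep,topsep=0pt,parsep=0pt,partopsep=0pt]
    \item the high-frequency part $n<\lambda(M)$ is treated as a sum of (pseudo-)independent, mean-zero cosines, using the Weyl equidistribution and pseudo-independence argument of that section;
    \item the low-frequency part $n\ge\lambda(M)$ is frozen at its value at $m=0$.
\end{itemize}
We then take $\widetilde D\sim N(\mu,\sigma^2)$ and get $Pr(\widetilde D>0)=\Phi(\mu/\sigma)$ by standardizing. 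The work is to identify $\mu$ and $\sigma$ in terms of the original $a_n,\phi_n$.

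\emph{Mean.} For $n\ge\lambda(M)$ the rotation $m\theta^n$ is small, so $\cos(m\theta^n+\varphi_n)\approx\cos\varphi_n=\sin(\phi_n/2)$. The corresponding term is then $A_n\cos\varphi_n=-2a_n\sin^2(\phi_n/2)=a_n(\cos\phi_n-1)$. Equivalently, freezing both sums in the original difference at $m=0$ gives $a_n\cos\phi_n-a_n$ directly. Summing over $n\ge\lambda(M)$ gives the stated $\mu$. The high-frequency terms contribute mean $O(A_n/(M\theta^n))\to0$ by the bound on $G_M$.

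\emph{Variance.} Each high-frequency term contributes $A_n^2/2=2a_n^2\sin^2(\phi_n/2)$, and the covariances are negligible by the same estimates as before. This yields $\sigma^2=\sum_{n<\lambda(M)}2a_n^2\sin^2(\phi_n/2)$. The stated formula has $a_n^2\sin^2\phi_n$ instead. I would therefore check this constant carefully and reconcile it: either the normalization of the statement differs, or one uses $\sin^2\phi_n\le 4\sin^2(\phi_n/2)$ as a bound. This step, matching the exact form of $\sigma$, is the one I expect to be the main obstacle. In either case the sign of $\mu/\sigma$ and the monotonicity conclusions do not depend on it.

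\emph{Limiting case.} When $M=\Theta(B)$, every component completes at least one rotation, so $\lambda(M)=h$. The low-frequency sum is then empty and $\mu=0$. Since the $\phi_n$ are not all zero, $\sigma>0$ (under the $4\sin^2(\phi_n/2)$ form this is immediate), and hence $Pr(\widetilde D>0)=\Phi(0)=1/2$.

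For the surrounding monotonicity claims of Theorem~\ref{thm:token_inversion}, note that $\mu\le0$ always. As $M$ grows or $B$ shrinks, $\lambda(M)=\Theta(h\log_B M)$ increases, which moves indices from $\mu$ into $\sigma$. Both $|\mu|$ decreasing and $\sigma$ increasing push $\Phi(\mu/\sigma)$ up toward $1/2$. Increasing $B$ does the reverse.
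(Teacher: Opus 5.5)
Your route is the paper's route: the paper's proof substitutes $A_n,\varphi_n$ into the normal-approximation formulas for $\mu$ and $\sigma$ and reads off $\Phi(\mu/\sigma)$. Your mean is correct, and so is your variance. The substitution gives $\sigma^2=\sum_{n<\lambda(M)}A_n^2/2=\sum_{n<\lambda(M)}2a_n^2\sin^2(\phi_n/2)=\sum_{n<\lambda(M)}a_n^2(1-\cos\phi_n)$. You can confirm this directly from $\mathrm{Var}_x[a_n\cos(x+\phi_n)-a_n\cos x]=a_n^2(1-\cos\phi_n)$ for uniform $x$. The stated $\sum_{n<\lambda(M)}a_n^2\sin^2\phi_n$ does not follow, and the paper asserts it without computation.

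So the step you flag is a real gap, and neither of your proposed fixes closes it. No renormalization turns $A_n^2/2$ into $a_n^2\sin^2\phi_n$. The termwise ratio (stated over correct) is $1+\cos\phi_n\in[0,2]$, so the stated $\sigma$ is too large for small high-frequency phases and too small for phases near $\pi$. Since $\mu\le 0$, $\Phi(\mu/\sigma_{\mathrm{stated}})$ is in general neither an upper nor a lower bound on the model's probability. The inequality $\sin^2\phi_n\le 4\sin^2(\phi_n/2)$ only yields $\sigma_{\mathrm{stated}}\le\sqrt{2}\,\sigma$, not the stated equality.

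The error also breaks the limiting case. Take every $\phi_n\in\{0,\pi\}$ with some $\phi_n=\pi$. Then the stated $\sigma$ is $0$ and $\Phi(0/0)$ is undefined, yet $D(m)=-2\sum_{\phi_n=\pi}a_n\cos(m\theta^n)$ plainly oscillates. The paper's argument that ``$\phi_n$ not all zero implies $\sigma\neq 0$'' therefore holds only for your formula, as you noticed.

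The right conclusion is that the statement mis-states $\sigma$. With $\sigma^2=\sum_{n<\lambda(M)}a_n^2(1-\cos\phi_n)$, your proposal is a complete proof at the paper's level of rigor, including the value $1/2$ when $\lambda(M)=h$. Your remark that the sign of $\mu/\sigma$ and the monotonicity conclusions are unaffected is correct.
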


\begin{proof}
$\mu$ and $\sigma$ can be obtained by applying \cref{eq:normal_mu} and \cref{eq:normal_sigma} on $\{A_n\}$ and $\{\varphi_n\}$. The probability then follows that for normal distribution. 

When $\lambda(M)=h$, $\mu=0$. Since we assume that $\{\phi_n\}$ are not all 0 and $\{a_n\}$ are not all 0, $\sigma\neq 0$, giving $Pr\left(\widetilde{D}>0\right)=1/2$.
\end{proof}

\subsection{Token Aliasing}
\label{subsec:token_aliasing_formal}
\begin{definition}
\label{def:token_aliasing}
    For a query vector $\mathbf{q}$ and two key vectors $\mathbf{k}_1,\mathbf{k}_2$, if the numerical results of RoPE product under a certain datatype $\hat{S}_{\mathbf{q},\mathbf{k}_1 | \texttt{dtype}}(m)=\hat{S}_{\mathbf{q},\mathbf{k}_2 | \texttt{dtype}}(m)$ for some $m$, then a \textbf{token aliasing} occurs at $m$.
\end{definition}

For the purpose of simplicity, we make an assumption that in a fixed transformer head, for all pairs $(\mathbf{q},\mathbf{k})$, for all $0\le n<h$, the magnitude at the given dimension pair $a_n(q,k)={|(q_{2n},q_{2n+1})||(k_{2n}, k_{2n+1})|}$ stays the same. For real attention heads, we may make the assumption that for a fixed frequency component $n$ and randomly chosen $q,k$ pairs, $a_n$ follows some normal distribution, and we may take its mean as $a_n$ discussed here.

If for query vector $\mathbf{q}$ and key vectors $\mathbf{k}_1$ and $\mathbf{k}_2$, 

$$
S_1(m)=\langle \mathbf{q},\mathbf{k}_1\rangle _m= \sum_{n=0}^{h-1}a_n \cos(m\theta^n+\phi_{1,n})
$$

and

$$
S_2(m)=\langle \mathbf{q},\mathbf{k}_2\rangle _m= \sum_{n=0}^{h-1}a_n \cos(m\theta^n+\phi_{2,n})
$$

are independent, then for $\varepsilon > 0,$ the probability of token aliasing can be expressed as

$$
Pr(|S_1(m)-S_2(m)|<\varepsilon).
$$

The difference $D=S_1(m)-S_2(m)$ can be seen as a normal variable

$$\widetilde{D}\sim N(\mu_1-\mu_2, \sigma_1^2+\sigma_2^2).$$

\begin{thm}
If $\sum_n{a_n}/h \approx \sqrt{\sum_n{a_n^2}/h} \approx a_{\max}$, then 

$$Pr(|\widetilde{D}|<\varepsilon)\approx 2\frac{2^{-f}h}{\sqrt{\lambda(m)}}\textrm{pdf}\left(\frac{h}{\sqrt{\lambda(m)}}-\sqrt{\lambda(m)}\right),$$

where pdf is the probability density function of standard normal distribution,

$$
\textrm{pdf}(x)=\frac{1}{\sqrt{2\pi}} e^{-x^2/2}.
$$ 
\end{thm}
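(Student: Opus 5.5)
We treat the difference $\widetilde{D}=\widetilde{S}_1-\widetilde{S}_2$ as a normal variable, exactly as for position aliasing. We then estimate $Pr(|\widetilde{D}|<\varepsilon)$ by a small-window (density) approximation, using the resolution $\varepsilon$ derived in \cref{subsec:position_aliasing_formal}.

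\textbf{Step 1: the moments of $\widetilde{D}$.} Applying \cref{eq:normal_mu} and \cref{eq:normal_sigma} to $S_1$ and $S_2$, which share amplitudes $a_n$, gives
$$\mu_1-\mu_2=\sum_{n\ge\lambda(m)} a_n(\cos\phi_{1,n}-\cos\phi_{2,n})$$
and
$$\sigma_1^2+\sigma_2^2=\sum_{n<\lambda(m)} a_n^2 .$$
Under the regularity hypothesis $\sum_n a_n/h\approx\sqrt{\sum_n a_n^2/h}\approx a_{\max}$, every $a_n$ is essentially $a_{\max}$. I would normalize by $a_{\max}$, so that $\sigma_D\approx a_{\max}\sqrt{\lambda(m)}$.

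For the mean, I would bound $|\mu_1-\mu_2|$ by the worst case over low-frequency phases. The relevant scale is the number of low-frequency terms, $h-\lambda(m)$, times $a_{\max}$. The intended estimate is $|\mu_1-\mu_2|\approx a_{\max}(h-\lambda(m))$, which corresponds to $\cos\phi_{1,n}-\cos\phi_{2,n}$ being of order one and coherent. This gives the standardized offset
$$\frac{\mu_1-\mu_2}{\sigma_D}\approx \frac{h-\lambda(m)}{\sqrt{\lambda(m)}}=\frac{h}{\sqrt{\lambda(m)}}-\sqrt{\lambda(m)} .$$

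\textbf{Step 2: the resolution.} From \cref{subsec:position_aliasing_formal}, $\varepsilon=\Theta(2^{-f}\max(\sqrt d,\sum_n a_n))$. Under the hypothesis, $\sum_n a_n\approx h a_{\max}$. Taking the dot-product term as dominant, $\varepsilon\approx 2^{-f}h a_{\max}$, so that $\varepsilon/\sigma_D\approx 2^{-f}h/\sqrt{\lambda(m)}$.

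\textbf{Step 3: the window approximation.} We have
$$Pr(|\widetilde{D}|<\varepsilon)=\Phi\Bigl(\tfrac{\varepsilon-\mu_D}{\sigma_D}\Bigr)-\Phi\Bigl(\tfrac{-\varepsilon-\mu_D}{\sigma_D}\Bigr).$$
Since $\varepsilon/\sigma_D$ is small (a factor $2^{-f}$), a first-order Taylor expansion gives $\approx 2(\varepsilon/\sigma_D)\,\mathrm{pdf}(\mu_D/\sigma_D)$. Substituting Steps 1 and 2 yields the stated formula.

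This also gives the counting consequence claimed in Theorem~\ref{thm:token_aliasing}. Once $\lambda=h$, the pdf argument vanishes, so the per-position probability is $\Theta(2^{-f}\sqrt h)$, and summing over $M$ positions gives $\Theta(2^{-f}\sqrt h M)$. The monotonicity in $M$ and $B$ follows because the pdf factor increases as $\lambda$ grows toward $h$, and $\lambda=\Theta(h\log_B M)$ increases with $M$ and decreases with $B$.

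\textbf{Main obstacle.} The hard step is justifying the mean offset $\mu_1-\mu_2\approx a_{\max}(h-\lambda)$. It is only a typical or worst-case scale: for arbitrary phases, the differences $\cos\phi_{1,n}-\cos\phi_{2,n}$ can cancel. I would state this explicitly as the regime where the low-frequency phase differences are of order one and coherent. Alternatively, I would use the inequality $\mathrm{pdf}(x)\le\mathrm{pdf}(0)$ to turn the approximation into a safe upper bound. A second point to handle is that the Taylor step needs $\varepsilon\ll\sigma_D$, which holds because of the $2^{-f}$ factor.
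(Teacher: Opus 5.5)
Your proposal is correct and follows essentially the same route as the paper. The shared steps are: the same moments, with $\sigma^2=\sum_{n<\lambda(m)}a_n^2$ and the mean difference carried by the low-frequency terms (the paper bounds it by $2\sum_{n\ge\lambda(m)}a_n$ and then treats it as of order $a_{\max}(h-\lambda(m))$); the resolution $\varepsilon/\sigma\gtrsim 2^{-f}h/\sqrt{\lambda(m)}$ taken over from position aliasing; and a first-order expansion of $\Phi\bigl(\frac{\varepsilon-\mu}{\sigma}\bigr)-\Phi\bigl(\frac{-\varepsilon-\mu}{\sigma}\bigr)$. The caveat you flag, that the mean offset is only a worst-case (coherent-phase) scale, is exactly the step where the paper moves from the upper bound on $\mu$ to $\mu/\sigma=\Theta\bigl(h/\sqrt{\lambda(m)}-\sqrt{\lambda(m)}\bigr)$ without further justification.
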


\begin{proof}
Without loss of generality, assume $\mu_1>\mu_2$. We have

$$
\mu=\mu_1-\mu_2=\sum_{n\ge \lambda(m)} a_n(\cos \phi_{1,n}-\cos \phi_{2,n})\le 2\sum_{n\ge \lambda(m)} a_n,
 $$

$$
\sigma^2=\sigma_1^2+\sigma_2^2=2\sigma_1^2=\sum_{n< \lambda(m)} a_n^2.
$$

So 

\begin{align*}
    Pr(|\widetilde{D}|<\varepsilon)=\Phi\left(\frac{\varepsilon-\mu}{\sigma}\right)-\Phi\left(\frac{-\varepsilon-\mu}{\sigma}\right).
\end{align*}

Since no $a_n$ dominates, using the same estimations for position aliasing,

\begin{align*}
\mu/\sigma=&\Theta(\frac{h}{\sqrt{\lambda(m)}}-\sqrt{\lambda(m)}),\\
\varepsilon/\sigma\ge&\Theta(\frac{2^{-f}h}{\sqrt{\lambda(m)}}),\\
    \Phi\left(\frac{\varepsilon-\mu}{\sigma}\right)-\Phi\left(\frac{-\varepsilon-\mu}{\sigma}\right)\approx & 2\frac{2^{-f}h}{\sqrt{\lambda(m)}}\text{pdf}(\frac{h}{\sqrt{\lambda(m)}}-\sqrt{\lambda(m)}).
\end{align*}
\end{proof}

When $\lambda(m)=h$, the probability converges to $\Theta(2^{1-f}\sqrt{h}/\sqrt{2\pi})$.

\section{Experiment Details}
\label{sec:exp_det}
\subsection{Case Study}
\label{subsec:case_study_details}

We use Head 0, Layer 0 of Llama3.1-8B \citep{grattafiori2024llama3herdmodels} as the case study sample. However, our method is applicable to any head in any layer of any RoPE-based decoder transformer model.

Each failure mode features a query token and one to two key tokens. For each key token, we calculate its RoPE product with the query, $S(m)$, for every $m$ in the context range $(0, M]$. 
 We calculate $S(m)$ for a certain head using the following steps:
\begin{itemize}
    \item we construct an input following the format \texttt{<bos>}\footnote{The \texttt{<bos>} token is added for the input sanity and is crucial if the target head is not from the first layer.}\texttt{ [key] ... [key] [query]}, where the key token is repeated $M$ times. 
    \item We calculate the hidden states for the target layer. This involves a forward pass through the embedding layer and every transformer layer before the one which contains our selected head. This standard process can be accelerated by FlashAttention \citep{dao2022flashattention}.
    \item For the selected head, we obtain the \textit{un-normalized} attention score, i.e. with neither the $\sqrt{d}$ normalization nor softmax. We only calculate the final row of the attention score matrix, since we are interested in the RoPE products involving the query token. 
    \item The result, $\bm{o}$, is an $(M+2)$-d array starting at index 0. For $0<m\le M$, we have $S(m)=o_{M+1-m}$. 
\end{itemize}

The occurrences of the four failure modes are then identified using the definitions \ref{def:position_inversion}, \ref{def:position_aliasing}, \ref{def:token_inversion}\footnote{For simplicity of implementation, we use $S(1)$ instead of $S(0)$ when identifying token inversion.}, \ref{def:token_aliasing} in \cref{sec:failure_modes}.  

For larger models or heads not located in the first layer, GPUs with memory corresponding to the model size are recommended. However, for our case study, we only load the first layer and the whole case study is conducted using a lap-top grade CPU with 64GB memory.

\subsubsection{Aliasing Probs for FP16 in Case Study}
FP16 has 10 explicit fraction bits, and the aliasing probabilities are different from using BF16. For position aliasing, see \cref{fig:pos_aliasing_example_fp16}.

\begin{figure*}[tbhp]
    
    \centering
    \begin{subfigure}[t]{0.46\textwidth}
        \centering
        \includegraphics[width=\textwidth]{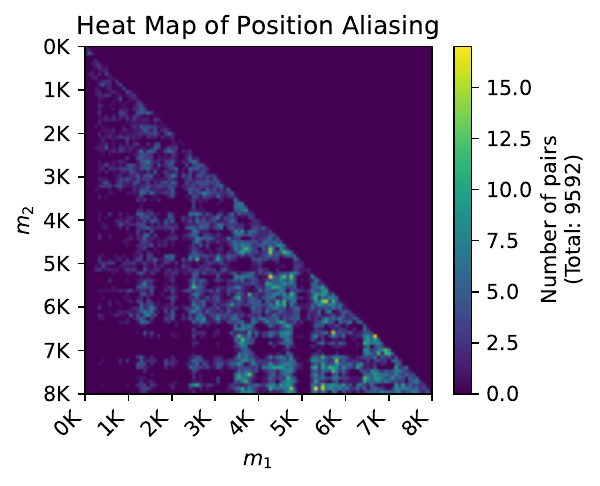}
        \caption{Distribution of position aliasing pairs for key ``cat'' and query ``pet''. 
        }
        \label{fig:position_aliasing_example_cat}
    \end{subfigure}
    ~ 
    \begin{subfigure}[t]{0.46\textwidth}
        \centering
        \includegraphics[width=0.98\textwidth]{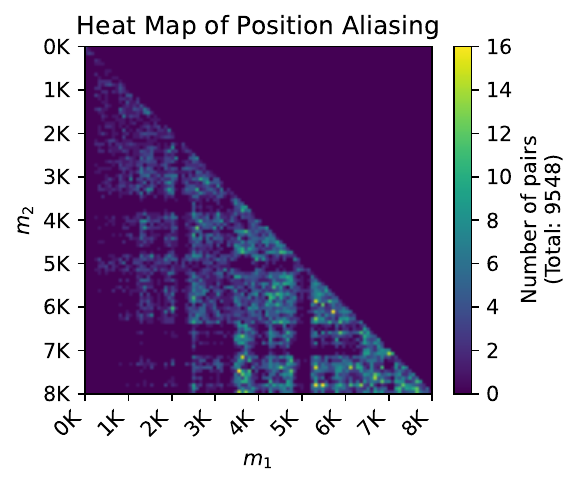}
        \caption{Position aliasing pairs for key ``dog'' and query ``pet''.}
        \label{fig:position_aliasing_example_dog}
    \end{subfigure} 
    \centering
    \begin{subfigure}[t]{0.5\textwidth}
        \centering
        \includegraphics[width=0.99\textwidth]{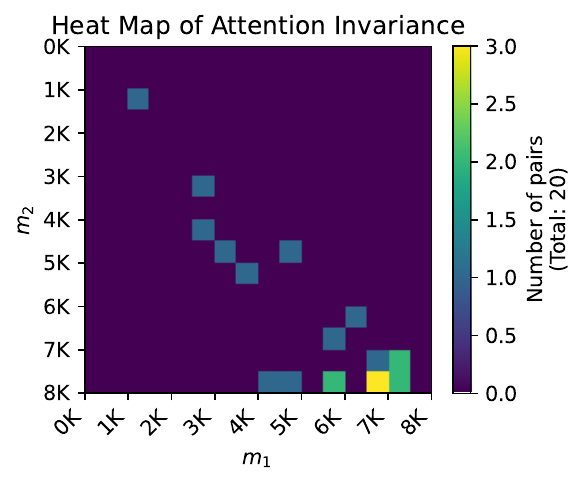}
        \caption{Attention invariance pairs for ``cat'', ``dog'' and ``pet''. 
        }
        \label{fig:attention_invariance_example_fp16}
    \end{subfigure}
    \caption{Heat maps of position aliasing and attention invariance pairs. FP16, Llama3.1-8B, Layer 0 Head 0. Pairs are grouped into a total of $200\times 200$ bins for position aliasing, and $16\times 16$ bins for attention invariance. 1K = 1024.}
\label{fig:pos_aliasing_example_fp16}
\end{figure*}

For token aliasing, see \cref{fig:token_aliasing_fp16}.

\begin{figure*}[htbp]
    
\centering
        \includegraphics[width=0.6\textwidth]{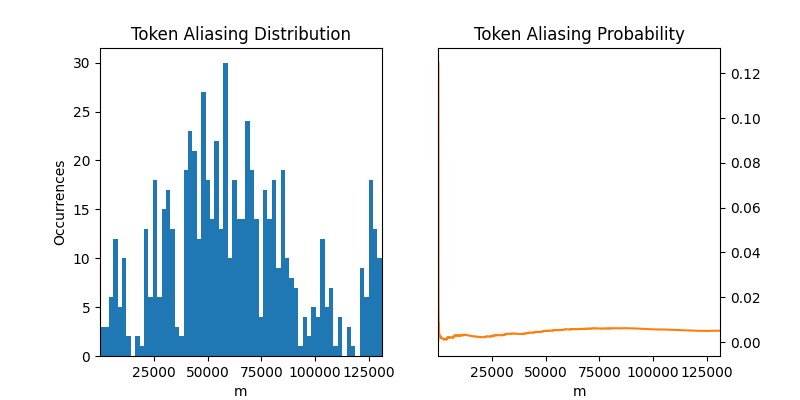}
        \caption{Distribution and probability of token aliasing for keys ``cat'' and ``dog'' and query ``pet''. Llama3.1-8B, Layer 0 Head 0. The probability converges to what is close to the estimated probability of 0.006.
        }
        \label{fig:token_aliasing_fp16}

\end{figure*}

\subsection{The Indexing Task}

We randomly generate a list \texttt{arr}. The list only contains integers 0, 1, 2 and 3. We control the length of the list to be powers of 2, ranging from 4 to 4096, and use \texttt{f"\{arr\}"} to convert it into a string. The model is required to answer the value of the given index \texttt{arr[i]}. Different models may not tokenize the input or apply the chat template the same way, but the correspondence is roughly 3 tokens per element, and we only evaluate model performance based on token count. For each length, we test the models on 10 randomly generated lists. For each list, we generate 10 independent query sessions with random indices. For each list, we report the average input tokens and mean accuracy across the 10 query sessions. We aggregate the results for different lists by also reporting the mean standard deviation of accuracy. 

We use the following prompt: \texttt{arr = \{arr\}\textbackslash nGiven the above array, don't think and directly answer the corresponding value concisely: arr[\{key\}] = }

We use the following models: smaller models (less than 10 B), including Llama-3.1-8B-Instruct \citep{grattafiori2024llama3herdmodels}, Mistral-7B-Instruct-v0.3 \citep{mistralai_mistral_7b_instruct_v03_2024}, Qwen3-8B \citep{yang2025qwen3technicalreport}; larger models (more than 100B), including DeepSeek-V3.1 \citep{deepseekai2025deepseekv3technicalreport}, Kimi-K2.5 \citep{kimiteam2026kimik25visualagentic}, gpt-oss-120b \citep{openai2025gptoss120bgptoss20bmodel}.

For models less than 10B, we use the HuggingFace transformers implementation \citep{wolf-etal-2020-transformers}. These inferences can be done on a GPU of more than 40GB. For models larger than 100B, we use the TogetherAI API \citep{togetherai_docs}. We disable reasoning mode where possible, and prompt the model to answer directly, but do not limit the generation length. We retrieve the last number from the model generation.

\section{RoPE in Real Models}
\label{sec:real_model_discussions}
As suggested in the experiment section \S\ref{sec:exp}, the  multiple heads and layers in a real model may only offer redundancies that provide limited protection. We provide the following discussions that are open-ended and intuitive rather than evidence-based, which may serve as preliminary insights for future analyses using more realistic models:

First, redundancy across attention heads is limited, since the heads are highly specialized and sparsely activated. Prior work \citep{kahardipraja2025atlasincontextlearningattention, wu2024retrievalheadmechanisticallyexplains, lin2026retrievalheadsdynamic} has shown that different heads are often responsible for text dependencies of different ranges, and that only a small number of heads are active at the same time. As an example, consider a long input with a 50\% probability of position inversion. Even if 8 out of 32 heads are actively retrieving, a proportion larger than what is commonly observed \citep{wu2024retrievalheadmechanisticallyexplains}, once every 250 tokens, \textit{all} retrieval heads will simultaneously exhibit position inversion. 

Second, redundancy across layers is largely limited due to the residue connection. Errors introduced by earlier layers carry on to later ones connected in series. Improved architectures like \cite{chen2026attnres} apply residue connections in parallel. This aggregation strategy potentially reduces the accumulative error. Even in such cases, errors in the final layer never get the chance to be calibrated. An attention invariance failure in the final layer can directly lead to a wrong output token. 

Finally, the underlying mechanism applies beyond the number of heads and layers. As context increases, an increasing oscillation leads to less positional uniqueness, and an increasing decay reduces the value of RoPE product and compresses token-wise difference. This means that the position and token identity of faraway text are increasingly likely to be poorly distinguished, leading to a weak or erroneous contribution to the context-aware representation.

%% file: content/101_related_works.tex
\section{Related Works}

\paragraph{Long Context Models} 

State-of-the-art language models are delivered with increasingly larger context window limits, some of them well beyond 1 million tokens (citations like \cite{comanici2025gemini,llama4,gpt5,magic2024ltm100m}). To deliver and utilize long-context models, apart from improvements in data curation (\cite{fu2024data}), training optimization (\cite{dao2022flashattention,li2023sequence,liu2023ring}), and efficient deployment (\cite{xiao2023streamingllm}), one of the main strategies is to adjust the value of RoPE base, which in the original paper is 10,000, ``the worst base value'' (\cite{liu2023scaling}). 

\paragraph{A War of Increasing RoPE Base}

Rotary Positional Embedding (RoPE, \cite{su2021roformer}) exhibits a decaying effect on attention scores of distant tokens. It is purposefully designed this way to model natural language with decreased dependency over longer distance (\cite{su2021roformer}). However, it is widely believed that this decay at least partially limits long-context performance (\cite{tworkowski2023focused, NEURIPS2024_9f12dd32, zhong2024understandingropeextensionslongcontext, miranda2024round}). Therefore, attempts to extend context length usually feature increased values of RoPE base, which leads to slower decay (\cite{peng2024yarn, gao2025train}). As a variant, some works remove RoPE at least partially from attention calculation \citep{wang2024lengthgeneralizationcausaltransformers,yang2025ropenopeagainnew}, or only apply RoPE to certain dimensions \citep{javaheripi2023phi,deepseek2026v4}, which, according to our analysis, is equivalent to applying an infinitely large RoPE base. However, although the resulting long-context models excel in retrieval-based tasks such as Needle-in-a-Haystack (\cite{kamradt2023needleinahaystack}), they still suffer a substantial performance drop on tasks that involve reasoning, variable tracking or other long-term dependency, well within their context limit (\cite{du2025contextlengthhurtsllm, hsieh2024ruler, kuratov2024babilong, liu-etal-2024-lost}).

\paragraph{RoPE as a Function of Distance}
Multiple works have analyses on the dot product after applying RoPE (the RoPE product) as a function w.r.t. distance between two tokens. Works like \cite{jonasson2025rotaryoffsetfeatureslarge,liu2023scaling, miranda2024round} identify high and low frequencies based on whether a rotary component completes a full circle. Works that  mainly focus on the decaying nature of RoPE, and its effect on the decreased attention score assigned to distant tokens (\cite{NEURIPS2024_9f12dd32,xiong-etal-2024-effective,chen2024fortify}), lead to suggested lower bounds of the RoPE base for certain context lengths (\cite{NEURIPS2024_9f12dd32,peng2024yarn}). However, these only focus on one side of the full picture: there are few systematic discussions about the oscillation in the waveform affected by the high-frequency terms, and how this can lead to upper bounds for the RoPE base selection. As a function of distance, the mathematical properties of the RoPE product are still little studied in depth, let alone their practical implications.

\paragraph{The Long-Context Dilemma} \cite{liu2026rotarypositionalembeddingsphase} is among the first to study the oscillation using signal processing techniques, introducing a theoretical upper-bound where the numeric precision and Nyquist Limit are involved. 
\cite{liu2026rotarypositionalembeddingsphase} shows that as long as we must use transformer with RoPE to process long data, we must face some sort of uncertainty and are forced to choose between positional and token-wise accuracy. It is natural to think that transformers with RoPE have limited capacity to deal with long-context data. One must doubt whether the current ways of utilizing long-context models are the most adequate: are the reasoning models that rely on long chain of thoughts really capable of being responsible to their own thought process? Can chat models really recall distant history? Is it really worth it to train models with context lengths of tens of millions of tokens just to pass the Needle in a Haystack test? It is possible that to make language models really process large volumes of text, we need some other sorts of model structures, or strategies to keep the current models within their effective context range.